\theoremstyle{plain}
\newtheorem{thm}{\protect\theoremname}
\theoremstyle{definition}
\newtheorem{defn}[thm]{\protect\definitionname}
\theoremstyle{plain}
\newtheorem{lem}[thm]{\protect\lemmaname}
\theoremstyle{plain}
\newtheorem{cor}[thm]{\protect\corollaryname}
\providecommand{\corollaryname}{Corollary}
\providecommand{\definitionname}{Definition}
\providecommand{\lemmaname}{Lemma}
\providecommand{\theoremname}{Theorem}
\global\long\def\grad#1{\boldsymbol{\mathit{g}}\left(#1\right)}\global\long\def\vb{\boldsymbol{b}}\global\long\def\va{\boldsymbol{\mathit{a}}}\global\long\def\vdelta{\boldsymbol{\mathit{\Delta}}}\global\long\def\v{\boldsymbol{\mathit{v}}}\global\long\def\xx{\x}\global\long\def\x{\boldsymbol{\mathit{x}}}\global\long\def\y{\boldsymbol{\mathit{y}}}\global\long\def\xp{\boldsymbol{\mathit{x'}}}\global\long\def\xstar{\boldsymbol{\mathit{x}}^{\boldsymbol{\mathit{\star}}}}\global\long\def\grid{\delta}\global\long\def\gridstar{\grid_{\star}}\global\long\def\gridgrad{\grid_{\nabla}}\global\long\def\qg{{\bf Q}}\global\long\def\qf{{\bf Q}^{\bf g}}\global\long\def\qs{{\bf Q}^{\bf w}}\global\long\def\unif#1{\text{Unif}\left(#1\right)}\global\long\def\fstar{f^{\star}}\global\long\def\err{\varepsilon}\global\long\def\interval{\left[-1/2,1/2\right)}\global\long\def\intg{\left[-\grid/2,\grid/2\right)}\global\long\def\intgs{\left[-\gridstar/2,\gridstar/2\right)}\global\long\def\one{{\bf 1}}
\global\long\def\gradnorm{G_{\ell_1}}
\global\long\def\sigmagrad{\sigma_{\nabla}}
\global\long\def\dom{\mathcal{G}}
\renewcommand{\paragraph}[1]{\noindent\vspace{0.2em}\textbf{#1}}
 \theoremstyle{plain}
\theoremstyle{definition}
\theoremstyle{remark}
\title{Quantized Distributed Training of Large Models with Convergence Guarantees}
\author{Ilia Markov\footnote{Institute of Science and Technology Austria, \texttt{ilia.markov@ist.ac.at}} \and
Adrian Vladu\footnote{CNRS \& IRIF, Université Paris Cité, \texttt{vladu@irif.fr}} \and
Qi Guo\footnote{Max Planck Institute for Informatics, \texttt{qiguo@mpi-inf.mpg.de}} \and
Dan Alistarh\footnote{Institute of Science and Technology Austria, \texttt{dan.alistarh@ist.ac.at}}}
\date{}
\begin{document}

\maketitle

\begin{abstract}
Communication-reduction techniques are a popular way to improve scalability in data-parallel training of deep neural networks (DNNs). The recent emergence of large language models such as GPT has created the need for new approaches to exploit data-parallelism. Among these, fully-sharded data parallel (FSDP) training is highly popular, yet it still encounters scalability bottlenecks. One reason is that applying compression techniques to FSDP is challenging: as the vast majority of the communication involves the model's weights, direct compression alters convergence and leads to accuracy loss. We present QSDP, a variant of FSDP which supports both gradient and weight quantization with theoretical guarantees, is simple to implement and has essentially no overheads. To derive QSDP we prove that a natural modification of SGD achieves convergence even when we only maintain quantized weights, and thus the domain over which we train consists of quantized points and is, therefore, highly non-convex. We validate this approach by training GPT-family models with up to 1.3 billion parameters on a multi-node cluster. Experiments show that QSDP preserves model accuracy, while completely removing the communication bottlenecks of FSDP, providing end-to-end speedups of up to 2.2x. 
\end{abstract}

\section{Introduction}
\label{introduction}

The impressive recent progress of Deep Learning in tasks such as natural language processing 
and computer vision has been accompanied by massive increases in parameter counts. 
For instance, large language models (LLMs) from Transformer family, such as GPT~\citep{radford2018improving}, OPT~\citep{zhang2022opt} and BLOOM~\citep{BLOOM} 
easily count billions of trainable parameters, which induces tremendous computational and memory costs. 
Training such models can easily exceed the memory capacity of a single computational unit, such as a GPU. 

As a consequence, standard distribution strategies such as \emph{data-parallel training}~\cite{bottou2010large}, which require each node to be able to keep all parameters in memory, are no longer directly applicable. 
Several novel distribution strategies have been proposed to mitigate this challenge, such as \emph{model-parallel training}~\cite{shoeybi2019megatron, raffel2020exploring}, \emph{pipeline-parallel training}~\cite{huang2019pipe,harlap2018pipedream} and \emph{model sharding}~\cite{ren2021zero,rajbhandari2020zero,rasley2020deepspeed, FairScale2021}.  

We consider the communication costs of distribution strategies for massive models, and focus on \emph{Fully-Sharded Data-Parallel (FSDP)} distributed training, which is among the most popular and user-friendly approaches to mitigate per-node memory limitations. 
FSDP is supported natively by Pytorch~\cite{pytorch}, Facebook \texttt{fairscale}~\cite{FairScale2021}, and Microsoft DeepSpeed~\cite{ren2021zero}, where it is known as ZeRO-3.  

The main idea behind FSDP is that both the training data \emph{and the model parameters} are partitioned among the $P$ nodes. That is, only a $1/P$ partition of the parameters of each layer is stored at a node. 
Then, both for the forward and for the backward pass, nodes proceed synchronously layer-by-layer, 
gathering full weights for the current layer, via \emph{all-to-all communication}, before executing its forward or backward operation. 
After this operation is complete, nodes can discard the current layer's received weights partitions, and move to the next layer.  
(Please see Figure~\ref{fig:fsdp_scheme} for an illustration, and Section~\ref{sec:background} for a detailed description.) 

The key advantage of this pattern is that it reduces memory usage linearly in $P$. 
Thus, it enables running models with billions of parameters on small or medium-sized clusters~\cite{FairScale2021, MosaicmlBlog}. 
At the same time, FSDP faces challenges in terms of \emph{communication efficiency}: 
since every forward and backward pass relies on all-to-all weight exchanges, FSDP can put massive pressure on the network bandwidth, which becomes a bottleneck.

As we will show, all-to-all communication leads to significant communication bottlenecks when training LLMs on multi-node clusters. 
Two key challenges to removing this communication bottleneck are that 
1) a majority of FSDP's communication are \emph{layer weights}: 
quantizing them naively loses theoretical convergence, and can easily lead to practical divergence; 
2) the FSDP setting poses stringent compute and memory constraints, restricting the set of approaches. 

\paragraph{Contribution.} 
We propose the first communication-efficient variant of FSDP, called QSDP,
which provides both convergence guarantees, and strong practical performance. 
QSDP is inspired by on a new analysis of SGD convergence with \emph{full quantization of transmitted model state}. 
That is, we show that a simple modified variant of SGD can allow both weights and gradients to be quantized during training, 
without additional per-node memory, nor costly local computation.
We find the fact that this is possible with convergence guarantees surprising, 
since nodes only observe \emph{biased estimators} of the gradients, taken over quantized weights, without any error-correction~\cite{karimireddy2019error}.  
From the practical perspective, our approach is fast and easy to implement, 
and completely removes the communication bottlenecks of FSDP, while recovering accuracy for billion-parameter GPT models. 

At a high level, the QSDP algorithm simply performs weight and gradient quantization before the corresponding FSDP all-to-all communication steps. 
While gradient compression can be performed using standard unbiased compressors, e.g.~\cite{alistarh2017qsgd}, 
weight compression is performed using a carefully-designed unbiased estimator.
Our key contribution is in the analysis: we model the training process as a new instance of sparse recovery~\cite{blumensath2008iterative, foucart2012sparse}, in which 1) the projection step is performed via quantization and not sparsification, and 2) the gradient step is itself quantized. 
This connection allows us to prove, under analytic assumptions, that QSDP converges towards a minimizer of the loss over the set of lattice points corresponding to the quantization being employed. We believe this is the first instance of such an analysis. 

We complement our analysis with an efficient implementation of QSDP in Pytorch~\cite{pytorch}, 
which we validate by training LLMs from the GPT family~\cite{radford2018improving, zhang2022opt} between 125M and 1.3B parameters, 
on a multi-node multi-GPU environment on Amazon EC2. 
Our experiments first show that communication bottlenecks can significantly impact standard FSDP in this standard practical setting, 
and that QSDP essentially removes such bottlenecks, without impact on accuracy. 
For example, QSDP can train GPT-1.3B to essentially the same perplexity up to 2.2x faster than standard FSDP on a 10Gbps network.  
In addition, we also introduce a ``learned'' adaptive weight quantization approach which can further reduce bit-width, without significant accuracy impact.

\begin{figure*}[h]
    \centering
    \includegraphics[width=\textwidth]{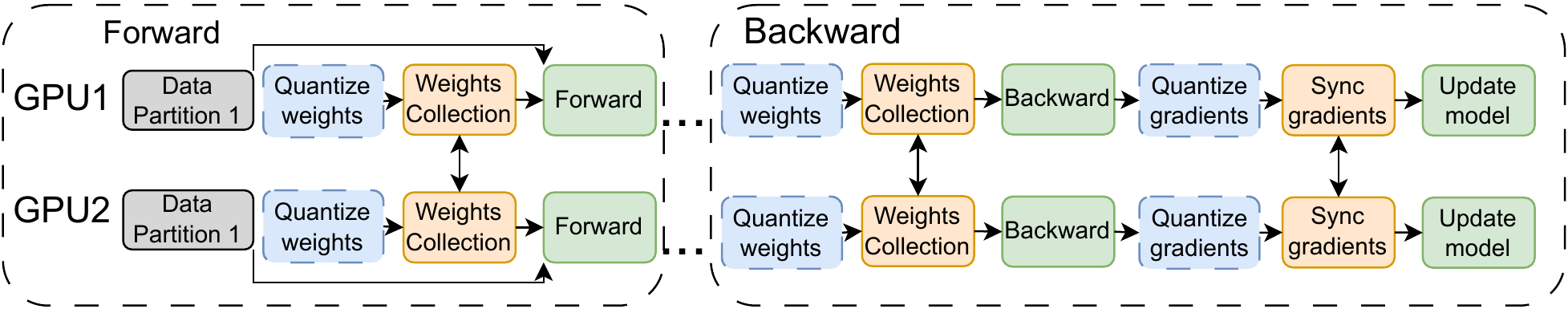}
    \caption{Scheme of (Quantized) Fully Sharded Data Parallel algorithm. During forward pass we collect the missing partitions of layer's weights, compute its activations and discard the partitions. At backward pass, we collect the weights again, compute the gradients, synchronize the gradients corresponding to our partition.}
    \label{fig:fsdp_scheme}
\end{figure*}

\vspace{-0.5em}
\section{Related Work}
\vspace{-0.5em}
\label{sec:related-work}

Over the past decade, there has been a massive amount of work on communication-efficient variants of Data-Parallel SGD, e.g.~\cite{seide2014, dryden2016communication, alistarh2017qsgd, vogels2019powersgd,tang2019doublesqueeze, wang2018atomo}. (Please see~\cite{ben2019demystifying} for a survey.) 
The vast majority of this work focuses on gradient compression, the main communication cost of SGD, and is thus mostly orthogonal to our work. 
The massive scale of recent deep models, e.g.~\cite{chowdhery2022palm, brown2020language} has led to significant work on novel distribution strategies~\cite{ren2021zero,rajbhandari2020zero,rasley2020deepspeed, FairScale2021} adapted to the requirements of these models, 
among which FSDP is a standard approach, e.g.~\cite{chowdhery2022palm}. 
While there is recent work on further optimizing FSDP, e.g.~\cite{jiang2022osdp, miao2022galvatron}, 
we are the first to investigate and address its communication costs. 
Our results are part of a broader line of work using different techniques to make the training of massive models amenable to standard infrastructure, e.g.~\cite{wang2022fine, yuan2022decentralized, borzunov2022petals}. 

Quantized weight exchange during training has been investigated independently in the context of decentralized distributed learning. 
\citet{tang2018decentralization} presents a scheme which supports quantized weight exchange by having each node extrapolate each of its neighbors' model values; yet, this would require unrealistic $\Theta(Pd)$ extra memory in our case. 
Similarly, other work in this vein~\citep{koloskova2019decentralized, nadiradze2021asynchronous, lu2020moniqua} either requires additional storage, or would not fit the FSDP algorithm structure. Both our analysis approach and our algorithms' guarantees are different relative to this line of work. 

Recently, there has been a surge of interest in \emph{post-training} quantization approaches for large language models, which reduce the deployment costs of already trained models~\cite{yao2022zeroquant, dettmers2022llm, frantar2022gptq, xiao2022smoothquant}. Our work is complementary, in the sense that we show that quantized weights and gradient representations can be applied \emph{during training}, without accuracy loss, leading to training speedup. On the other hand, these post-training approaches would be too expensive to be executed for compression at training time. 

A parallel line of work aims to perform \emph{fully-quantized} training of DNNs~\cite{banner2018scalable, zhu2020towards}. 
One general finding from this line of work is that integrating weight and gradient quantization \emph{into training} is extremely challenging, even when using 8bit precision, from both accuracy and performance perspectives. Specifically, this line of work investigates model modifications via e.g. parameter tuning and specialized normalization layers, in order to recover accuracy. By contrast, we preserve model structure, and do not modify hyper-parameter values, although we only quantize transmitted state. 

\section{Background and Motivation}
\label{sec:fsdp}

\subsection{Data-Parallel Training} \label{sec:dptraining}
In this classic SGD distribution pattern~\cite{bottou2010large}, each node (e.g., GPU) holds a copy of the model, and the 
data is partitioned among the nodes. 
Each training step samples a subset of the data called a \emph{batch}, 
performs a \emph{forward pass} over the batch to obtain model predictions, 
and then performs a \emph{backward pass} to obtain gradient updates. 
Finally, nodes communicate their local gradient updates in all-to-all fashion to keep the model in sync.

\subsection{Gradient Compression}

Transmitting gradients is the key communication cost of Data-Parallel SGD, and there has been a tremendous amount of work on addressing the resulting bandwidth bottleneck~\cite{seide2014, dryden2016communication, strom2015scalable}. (As this area is extremely vast, we refer to~\citet{ben2019demystifying, liu2020distributed} for a full overview.) 
Of these, gradient quantization is a particularly-popular technique, which has the advantage that variants of it can be implemented without additional memory cost. 
A simple example is the QSGD technique~\cite{alistarh2017qsgd}, which is essentially a codebook compression method which maps each gradient value 
to a point on a uniform grid, via randomized rounding. 
For this, values are first scaled to the range $[-1, 1]$, and then each scaled coordinate $v$ is mapped to one of the endpoints of its quantization interval $v \in [q_i, q_{i+1}]$ via the following rule:
$$
q(v) = 
\begin{cases}
q_i, \text{with probability } \frac{ v - q_{i} }{q_{i + 1} - q_i}, \\
q_{i+1}, \text{otherwise.}
\end{cases}
$$

It is easy to see that this gradient estimator is unbiased with respect to the stochastic quantization, 
and that its variance can be bounded by the norm of the original gradient. 
We will revisit this scheme in Sections~\ref{sec:analysis} and~\ref{sec:implementation}.

\subsection{Fully-Sharded Data-Parallel Training}

As the name suggests, FSDP starts from the Data-Parallel (DP) approach. The main observation is that nodes do not necessarily need to store the full set of parameters at every stage of training, in particular during the backward pass. Specifically, we use the scarce GPU memory to represent only those layers which are in the forward-backward ``working set'' at a given moment of time. 

Initially, model parameters are partitioned, so that each of the $P$ workers is assigned a distinct $1/P$ fraction 
of each layer's weights. At each optimization step (see Figure~\ref{fig:fsdp_scheme}, ignoring the dashed quantization operations), 
before the forward pass on a layer, each worker collects the missing partitions from other workers, computes the output activations, 
discards the received partitions and proceeds to the next layer. 
For the backward pass, workers again collect all layer weights, compute the gradients, synchronize them, discard the layer weights and proceed to the next layer. Technically, each optimization step consists of two AllGather collective operations for weights, and one Reduce-Scatter to sync gradients
(full pseudocode in Appendix~\ref{app:training_details}).

One can easily check that the above approach implements the standard SGD iteration one-to-one, relative to a sequential execution. 
If we denote by $\y_t$ the model's parameter vector used at iteration $t$, and by $\grad{\y_t}$ the average of the nodes' stochastic gradients at step $t$, taken at $\y_t$, then, for learning rate $\eta$, we can model the iteration as 
\begin{equation} 
\y_{t+1}= \y_t - \eta \grad{\y_t}. \label{eq:sgd}
\end{equation}
 
\paragraph{FSDP with Compression.} 
The natural way to reduce the cost of weight and gradient transmission in the above scheme would be to simply quantize them before transmission. 
(Please see the full Figure~\ref{fig:fsdp_scheme}.) 
To examine the impact of adding compression on the above SGD iteration, let us consider abstract quantization operators $\qs$ applied to the weights, and $\qf$ applied to the gradients. (We will specify these quantization functions precisely in Section~\ref{sec:analysis}, and the exact implementation in Section~\ref{sec:implementation}.)
For  iteration $t \geq 0$, let $\v_t$ be a ``virtual'' view of the model weights at the beginning of iteration $t$, obtained by aggregating 
all the weights, across all the weight partitions, \emph{in full precision}. 

First, notice that, if we apply $\qs$ before all transmissions, then the algorithm will only observe the \emph{quantized} version of $\v_t$, 
which we denote by $\qs(\v_t)$. 
Then, we can re-write one iteration of the algorithm as 
\begin{equation*}
    \v_{t+1} = \qs( \v_t ) - \eta \qf( \grad{\qs(\v_t)} ).  
\end{equation*}

This formulation inspires the notation $\x_t = \qs(\v_t)$, as the algorithm only ``sees'' the quantized version of the full-precision weights.  
Then, we get the following iteration:
\begin{equation}
    \x_{t+1} = \qs( \x_t - \eta \qf( \grad{ \x_t })), 
\end{equation}

\noindent which would correspond to an abstractly-quantized version of FSDP. 
This iteration is the starting point for our analysis in the next section.

\section{SGD with Quantized Weights and Provable Convergence}
The cornerstone of our method consists of a stochastic gradient method that provably converges to a good \emph{quantized} iterate, under reasonable analytic assumptions. One main novelty is that it converges despite the fact that the domain is \emph{non-convex}. At a very high level, it is similar to the iterative hard thresholding (IHT) method, which  achieves provable guarantees despite the fact that it seeks a good iterate in the set of vectors of bounded sparsity~\cite{blumensath2008iterative}.
Throughout this section, we abstract away specifics of the system architecture, since they are not relevant to our analysis. We explain their relationship to the actual implementation in Section~\ref{sec:implementation}.

\subsection{Background and Assumptions}\label{sec:background}

The main challenge we face is to obtain quantized solutions to optimization problems that seek to minimize a function $f:\mathbb{R}^n \rightarrow \mathbb{R}$:
\begin{equation}
\min_{\x \in \dom} f(\x)\,,
\end{equation}
where the domain $\dom$ is a lattice that allows for an efficient communication of its elements. We restrict our attention to shifts of the  lattice $\grid \mathbb{Z}^n$ along the direction of the all-ones vector. Formally, $\dom = \{\grid \mathbb{Z}^n + r \one: r\in \intg \}$.

\paragraph{Overview.} Even in the case where $f$ is convex, the non-convex structure of $\dom$ makes it incredibly difficult to obtain a good minimizer to $f$ without suffering a large loss. In fact, problems of this form are generally NP-hard. However, we show that when $f$ is reasonably well-conditioned we can obtain strong convergence guarantees. The idea consists of alternating stochastic gradient descent steps with applications of a quantization operator $\qs$ which projects the new iterate back onto a certain subset of $\dom$. 
Letting $\grad{\x_t}$ be a stochastic gradient, and $\grid$ a parameter which determines the coarseness of the quantization grid that we project onto,
our update at step $t+1$ has the following form:
\begin{equation}
\x_{t+1}=\qs_{\grid}\left(\x_t - \eta \grad{\x_t} \right)\,.\label{eq:iteration}
\end{equation}

This formulation covers the practical case where the stochastic gradient $\grad{\x_t}$ corresponds to a mini-batch stochastic gradient.  Indeed, as in practice $f$ takes the form $f(\x) = \frac{1}{P m} \sum_{i=1}^P \sum_{j=1}^m f(\x;\y_j)$, where $S = \{\y_1 , \dots , \y_m \}$ are data samples, and $f_i(\x)$ are loss functions at individual nodes,  the stochastic gradients obtained via backpropagation takes the form $\frac{1}{\vert B \vert} \sum_{j\in B} \nabla f_i(\x; \y_j)$, where $i$ is a random node, and $B$ is a sampled mini-batch.

\paragraph{Quantization by Random Shift.} For weight quantization, we consider the following unbiased stochastic quantization method. 
To quantize a vector, we first sample a single fixed random scalar $r$, then shift all the coordinates of the vector by $r$. 
For vector encoding, it rounds each coordinate to the nearest neighbor on the quantization grid, and sends the lattice coordinates of the resulting vector, together with the scalar $r$. For decoding, it takes the lattice point, and undoes the shift on the quantized coordinates. 
The notable difference between this and more standard quantization methods, e.g.\cite{alistarh2017qsgd}, is that quantization is \emph{dependent} across coordinates. In exchange for losing independence, it allows us to provide stronger guarantees in the context of weight quantization. 
We define it formally:

\begin{defn}
[quantization by random shift]\label{def:qshift} Let $\grid>0$
be a scalar defining the coarseness of the quantization grid. Let
a scalar $r\in\intg$, and let the deterministic operator $\qs_{r,\grid}:\mathbb{R}\rightarrow\mathbb{R}$
which rounds to the nearest element in $\grid\mathbb{Z}+r$:
\[
\qs_{r,\grid}\left(x\right)=\grid\cdot\left\lfloor \frac{x-r}{\grid}\right\rceil +r\,.
\]
Define the randomized quantization operator $\qs_{\grid}:\mathbb{R}\rightarrow\mathbb{R}$
via $\qs_{\grid}\left(x\right)=\qs_{r,\grid}\left(x\right)$, for
a random $r\sim\unif{\intg}$. We apply $\qs_{\grid}$ to vectors,
with the meaning that it is dependently applied to each coordinate
for a single random shift $r$.
\end{defn}

We use this quantization methods to show that, for a well conditioned loss function, and appropriate grid parameters, the iteration (\ref{eq:iteration}) converges, under reasonable analytical assumptions, to a set of weights
that are comparable in quality to the best possible quantized weights from a slightly coarser grid. We note that to further reduce communication costs, our method also supports gradient quantization in addition to weight quantization, provided that gradients are quantized using an (arbitrary) unbiased estimator.

\paragraph{Analytical Assumptions.}
Formally, our analysis uses the following assumptions on $f$.
\begin{enumerate}
\item Unbiased gradient estimators with variance $\sigma$: $\mathbb{E}\left[\grad{\x}\vert\x\right]=\nabla f\left(\x\right)$.
\item For $\beta>0$, the $\beta$-smoothness condition: for all \ensuremath{\x,\vdelta},
\[
f\left(\x+\vdelta\right)\leq f\left(\x\right)+\left\langle \nabla f\left(\x\right),\vdelta\right\rangle +\frac{\beta}{2}\left\Vert \vdelta\right\Vert _{2}^{2}\,.
\]
\item For $\alpha>0$, the Polyak-\L ojasiewicz ($\alpha$-PL) condition:
\[
\frac{1}{2}\left\Vert \nabla f\left(\x\right)\right\Vert _{2}^{2}\geq\alpha\left(f\left(\x\right)-f^{*}\right)\,,
\]
where $f^{*}=\min_{\x}f\left(\x\right)$.
\end{enumerate}
The first two assumptions are standard in stochastic optimization (see e.g.~\cite{lin2019dynamic}). The Polyak-Łojasiewicz (PL) condition~\citep{karimi2016linear}  is common in non-convex optimization, and versions of it are essential in the analysis of DNN training~\citep{liu2020toward, allen2019convergence}.  In words, it states that small gradient norm, i.e. approximate stationarity, implies closeness to optimum in function value. 

\subsection{Main Theoretical Results}

We are now ready to state our main analytical result.

\begin{restatable}{thm}{mainthm}
\label{thm:mainthm}
Let $\alpha, \beta, \gridstar, \err > 0$ and $\sigma \geq 0$ be real parameters, and let 
$\eta=\min\left\{ \frac{3}{10}\frac{\err\alpha}{\sigma^{2}},1\right\}$. 
Let $f:\mathbb{R}^{n}\rightarrow\mathbb{R}$ be a
$\beta$-smooth and $\alpha$-PL function, with access to a stochastic
gradient $\grad{\x}$, i.e. $\mathbb{E}\left[\grad{\x}\vert\x\right]=\nabla f\left(\x\right)$
with bounded variance $\mathbb{E}\left\Vert \grad{\x}-\nabla f\left(\x\right)\right\Vert _{2}^{2}\leq\sigma^{2}$. For each $r\in\intgs$, let $\xstar_{r,\gridstar}$
be any minimizer of $f$ over $\gridstar\mathbb{Z}^{n}+r\one$.
Let $\grid=\frac{\eta}{\left\lceil 16\left(\beta/\alpha\right)^{2}\right\rceil }\cdot\gridstar$. Consider the iteration:
\[
\x_{t+1}=\qs_{\grid}\left(\x_{t}-\frac{\eta}{\beta}\grad{\x_t}\right)\,.
\]
In 
$T=\frac{10}{\eta}\cdot\frac{\beta}{\alpha}\ln\frac{f\left(\x_{0}\right)-\mathbb{E}f(\xstar_{r,\gridstar})}{\err}
$
iterations we obtain a point $\x_{T}$ satisfying $\mathbb{E}f\left(\x_{T}\right)-\mathbb{E}f(\xstar_{r,\gridstar})\leq\err$.
\end{restatable}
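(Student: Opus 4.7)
The plan is to treat the iteration $\x_{t+1}=\qs_{\grid}(\x_{t}-(\eta/\beta)\grad{\x_{t}})$ as standard SGD augmented by an extra, independent, unbiased perturbation coming from the quantizer $\qs_{\grid}$. A preliminary lemma establishes two basic properties of the random-shift quantizer: it is unbiased, $\mathbb{E}[\qs_{\grid}(\x)]=\x$, with variance bounded by $\mathbb{E}\|\qs_{\grid}(\x)-\x\|_{2}^{2}\leq n\grid^{2}/12$. Both follow from a change of variables showing that, for $r\sim\unif{\intg}$, the per-coordinate error $\qs_{r,\grid}(x_{i})-x_{i}$ is uniform on $[-\grid/2,\grid/2)$; the shared shift across coordinates introduces dependence between the errors but does not affect the sum-of-squares variance.

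With this in hand, I apply $\beta$-smoothness to the update, decomposing it as $\x_{t+1}-\x_{t}=-(\eta/\beta)\grad{\x_{t}}+\xi_{t}$, where $\xi_{t}$ is zero-mean and independent of $\grad{\x_{t}}$ conditional on $\x_{t}$. Taking expectations annihilates the cross term: the linear contribution becomes $-(\eta/\beta)\|\nabla f(\x_{t})\|_{2}^{2}$, while the quadratic contribution yields $(\eta^{2}/(2\beta))(\|\nabla f(\x_{t})\|_{2}^{2}+\sigma^{2})+(\beta/2)\cdot n\grid^{2}/12$. Using $\eta\leq 1$ leaves an effective coefficient $-\eta/(2\beta)$ on $\|\nabla f(\x_{t})\|_{2}^{2}$, and applying the PL condition converts this into the geometric recurrence
\[
\mathbb{E}[f(\x_{t+1})-f^{*}]\leq(1-\eta\alpha/\beta)\,\mathbb{E}[f(\x_{t})-f^{*}]+\frac{\eta^{2}\sigma^{2}}{2\beta}+\frac{\beta n\grid^{2}}{24}.
\]
Unrolling for $T$ steps and summing the geometric series for the two-term noise floor gives a bound of the form $\mathbb{E}f(\x_{T})-f^{*}\leq(\text{tiny shrinkage of the initial gap})+\eta\sigma^{2}/(2\alpha)+\beta^{2}n\grid^{2}/(24\eta\alpha)$.

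The main obstacle is upgrading this bound, which is stated relative to $f^{*}$, into one relative to $\mathbb{E}f(\xstar_{r,\gridstar})$. The choice $\eta\leq 3\err\alpha/(10\sigma^{2})$ makes $\eta\sigma^{2}/(2\alpha)$ a constant fraction of $\err$. The quantization noise $\beta^{2}n\grid^{2}/(24\eta\alpha)$, after substituting $\grid=\eta\gridstar/\lceil 16(\beta/\alpha)^{2}\rceil$, becomes a $\Theta(\eta(\alpha/\beta)^{3})$ fraction of the intrinsic coarse-grid gap $\mathbb{E}f(\xstar_{r,\gridstar})-f^{*}\leq \beta n\gridstar^{2}/24$ (a $\beta$-smoothness estimate at the unconstrained optimum, with expectation taken over $r\sim\unif{\intgs}$). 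Rewriting the recurrence for $\phi_{t}=\mathbb{E}f(\x_{t})-\mathbb{E}f(\xstar_{r,\gridstar})$ directly yields an extra favorable $-(\eta\alpha/\beta)(\mathbb{E}f(\xstar_{r,\gridstar})-f^{*})$ term that pays for the quantization noise generated inside the iteration; the factor $16(\beta/\alpha)^{2}$ appearing in the definition of $\grid$ is precisely the constant that makes this absorption go through. Finally, the value of $T$ chosen in the statement ensures that the exponential decay of the initial error also falls below $\err$.
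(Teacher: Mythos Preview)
Your one-step recurrence
\[
\mathbb{E}[f(\x_{t+1})-\fstar]\leq(1-\eta\alpha/\beta)\,\mathbb{E}[f(\x_{t})-\fstar]+\frac{\eta^{2}\sigma^{2}}{2\beta}+\frac{\beta n\grid^{2}}{24}
\]
is fine, but the absorption step in the last paragraph does not go through. Rewriting in terms of $\phi_{t}=\mathbb{E}f(\x_{t})-\mathbb{E}f(\xstar_{r,\gridstar})$ indeed produces the favourable term $-(\eta\alpha/\beta)\bigl(\mathbb{E}f(\xstar_{r,\gridstar})-\fstar\bigr)$, and for it to cancel the additive quantization noise $\beta n\grid^{2}/24$ you would need a \emph{lower} bound of the form $\mathbb{E}f(\xstar_{r,\gridstar})-\fstar\gtrsim \alpha n\gridstar^{2}$. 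The inequality you invoke, $\mathbb{E}f(\xstar_{r,\gridstar})-\fstar\leq\beta n\gridstar^{2}/24$, points the wrong way, and no matching lower bound holds under the stated hypotheses. Concretely, take $f(\x)=\tfrac{\alpha}{2}\bigl\|\x-\tfrac{\one^{T}\x}{n}\one\bigr\|_{2}^{2}$, which is $\alpha$-PL and $\alpha$-smooth; its minimizer set is the line $\{t\one:t\in\mathbb{R}\}$, and for every $r$ the point $r\one\in\gridstar\mathbb{Z}^{n}+r\one$ is a global minimizer, so $\mathbb{E}f(\xstar_{r,\gridstar})-\fstar=0$. Your residual noise floor is then $\beta^{2}n\grid^{2}/(24\eta\alpha)$, a fixed positive quantity determined by $\gridstar$ with no relation to $\err$, and the conclusion $\mathbb{E}f(\x_{T})-\mathbb{E}f(\xstar_{r,\gridstar})\leq\err$ cannot follow from your bound. (The theorem is nevertheless true in this example---one checks that after a single step the iterate lands on $\{t\one\}$ and $f$ vanishes---so the crude smoothness estimate $(\beta/2)\|\xi_{t}\|_{2}^{2}$ for the effect of quantization is simply far too pessimistic here.)

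The paper never uses the absolute bound $\mathbb{E}\|\qs_{\grid}(\y)-\y\|_{2}^{2}\leq n\grid^{2}/12$. Its key lemma is instead the comparison
\[
\mathbb{E}\bigl\|\qs_{\grid}(\y)-\y\bigr\|_{2}^{2}\leq\frac{\grid}{\gridstar}\,\mathbb{E}_{r}\bigl\|\xstar_{r,\gridstar}-\y\bigr\|_{2}^{2},
\]
valid for arbitrary $\xstar_{r,\gridstar}\in\gridstar\mathbb{Z}^{n}+r\one$ and relying on $\gridstar/\grid\in\mathbb{Z}$. Combined with the PL consequence $\|\z-\xstar\|_{2}^{2}\leq(2/\alpha)(f(\z)-\fstar)$, this converts the quantization contribution into a \emph{multiplicative} term of order $(\grid/\gridstar)(\beta/\alpha)\bigl(f(\x)-\fstar+\mathbb{E}f(\xstar_{r,\gridstar})-\fstar\bigr)$ in the recurrence, rather than an additive constant. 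Both pieces are then absorbed by choosing $\grid/\gridstar\leq\eta/\lceil 16(\beta/\alpha)^{2}\rceil$, which is exactly what that constant is calibrated for.
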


\paragraph{Discussion.} 
To understand the convergence of this method, let us establish as benchmark the expected value $\mathbb{E} f(\xstar_{r,\gridstar})$ of the best iterate on the lattice $\gridstar \mathbb{Z}^n + r\one$, where the expectation is taken over the shift $r$. Our method finds a point in a slightly finer grid $\x_T \in \grid \mathbb{Z}^n + r'\one$, such that in expectation over the randomness in the algorithm, the value of the function is at most $\epsilon$ larger than our benchmark. The sacrifice we have to make in exchange for this surprisingly strong convergence is an increase in resolution for the iterates we maintain, which is dependent, among others, on the condition number of $f$.

Since our method works with stochastic gradients, we can additionally quantize gradients to further reduce communication. In fact any quantization method that compresses gradients to an unbiased estimator with low variance can be directly plugged into Theorem~\ref{thm:mainthm}.

We state in the following corollary a generic bound for quantized gradients, which highlights the trade-off between variance and communication for the quantization method.

\begin{cor}[Gradient Quantization]
\label{cor:qwqg}
Let $\alpha, \beta, \gridstar, \err, b > 0$ and $\sigma, \sigmagrad \geq 0$ be real parameters. Let $f:\mathbb{R}^{n}\rightarrow\mathbb{R}$ be a
$\beta$-smooth and $\alpha$-PL function, with access to a stochastic
gradient estimator $\grad{\x}$, i.e. $\mathbb{E}\left[\grad{\x}\vert\x\right]=\nabla f\left(\x\right)$
with bounded variance $\mathbb{E}\left\Vert \grad{\x}-\nabla f\left(\x\right)\right\Vert _{2}^{2}\leq\sigma^{2}$.
Let $\qf : \mathbb{R}^n \rightarrow \mathbb{R}$ be a gradient quantizer which for any stochastic gradient $\grad{\x}$ encountered during the execution of the algorithm, ensures:
\begin{enumerate}
\item unbiased estimator: $\mathbb{E}\left[ \qf(\grad{\x}) \vert \grad{\x}\right] = \grad{\x}$,
\item variance: $\mathbb{E}\left[ \left\Vert\qf(\grad{\x})  - \grad{\x} \right\Vert_2^2 | \grad{\x} \right] \leq \sigmagrad^2$, 
\item requires $b$ bits to communicate $\qf(\grad{\x})$.
\end{enumerate}
 
For each $r\in\intgs$, let $\xstar_{r,\gridstar}$
be any minimizer of $f$ over $\gridstar\mathbb{Z}^{n}+r\cdot\one$.
Let $\eta=\min\left\{ \frac{3}{10}\frac{\err\alpha}{\sigma^{2}+\sigmagrad^2},1\right\}$, $\grid=\frac{\eta}{\left\lceil 16\left(\beta/\alpha\right)^{2}\right\rceil }\cdot\gridstar$, and consider the iteration:
\[
\x_{t+1}=\qs_{\grid}\left(\x_{t}-\frac{\eta}{\beta}\qf\left(\grad{\x_t}\right)\right)\,.
\]
In $T=\frac{10}{\eta}\cdot\frac{\beta}{\alpha}\ln\frac{f\left(\x_{0}\right)-\mathbb{E}f\left(\xstar_{r,\gridstar}\right)}{\err}$
iterations we obtain a point $\x_{T}$ satisfying $\mathbb{E}f\left(\x_{T}\right)-\mathbb{E}f(\xstar_{r,\gridstar})\leq\err$.
Furthermore, the entire algorithm requires $O\left(b \cdot \frac{\sigma^{2}+\sigmagrad^2}{\err\alpha}\frac{\beta}{\alpha}\ln\frac{f\left(\x_{0}\right)-\mathbb{E}f(\xstar_{r,\gridstar})}{\err}\right)$
bits to communicate the quantized gradients.
\end{cor}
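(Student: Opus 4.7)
My plan is to reduce Corollary~\ref{cor:qwqg} to Theorem~\ref{thm:mainthm} by treating the composed estimator $\qf(\grad{\x})$ as a single stochastic gradient with inflated variance $\sigma^{2}+\sigmagrad^{2}$, and then invoking the theorem as a black box.

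The first step is to verify that $\qf(\grad{\x})$ satisfies the hypotheses of Theorem~\ref{thm:mainthm}. Unbiasedness follows from a single application of the tower property using condition~(1):
\[
\mathbb{E}\bigl[\qf(\grad{\x})\,\big|\,\x\bigr] = \mathbb{E}\bigl[\,\mathbb{E}[\qf(\grad{\x})\mid\grad{\x}]\,\big|\,\x\bigr] = \mathbb{E}[\grad{\x}\mid\x] = \nabla f(\x)\,.
\]
For the variance, I would use the decomposition $\qf(\grad{\x}) - \nabla f(\x) = (\qf(\grad{\x})-\grad{\x}) + (\grad{\x}-\nabla f(\x))$, expand the squared norm, and observe that conditioning on $\grad{\x}$ first makes the first summand mean-zero by~(1), so the cross term vanishes. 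Combining with~(2) and the variance assumption on $\grad{\x}$ gives $\mathbb{E}\|\qf(\grad{\x})-\nabla f(\x)\|_{2}^{2} \leq \sigma^{2}+\sigmagrad^{2}$.

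With this variance bound in hand, the iteration in Corollary~\ref{cor:qwqg} is literally the iteration of Theorem~\ref{thm:mainthm} applied to the stochastic gradient $\qf(\grad{\x_t})$, with the variance parameter $\sigma^{2}+\sigmagrad^{2}$ in place of $\sigma^{2}$. The specified $\eta$ and $\grid$ are exactly what the theorem prescribes under that substitution, so the convergence guarantee $\mathbb{E}f(\x_{T})-\mathbb{E}f(\xstar_{r,\gridstar})\leq\err$ within $T=\tfrac{10}{\eta}\cdot\tfrac{\beta}{\alpha}\ln\tfrac{f(\x_{0})-\mathbb{E}f(\xstar_{r,\gridstar})}{\err}$ iterations is immediate. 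For the bit complexity, each of the $T$ iterations transmits $\qf(\grad{\x_t})$ using $b$ bits by~(3), and substituting $1/\eta = O\bigl((\sigma^{2}+\sigmagrad^{2})/(\err\alpha)\bigr)$ into $b\cdot T$ yields the stated total.

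I do not expect any genuine obstacle, as the entire corollary is a black-box reduction. The only point that deserves explicit care is that the randomness used by $\qf$ at step $t$ should be drawn independently of the random shift used by $\qs_{\grid}$ at that same step, so that the tower-property computation above is clean and Theorem~\ref{thm:mainthm}'s treatment of its stochastic gradient --- which may be any unbiased estimator with the stated variance --- carries over unmodified.
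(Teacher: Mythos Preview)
Your proposal is correct and matches the paper's approach: the paper also reduces Corollary~\ref{cor:qwqg} to Theorem~\ref{thm:mainthm} by showing that $\qf(\grad{\x})$ is an unbiased estimator of $\nabla f(\x)$ with variance at most $\sigma^2+\sigmagrad^2$ (its Corollary~\ref{cor:quant-var}), and then invokes the theorem as a black box. The only cosmetic difference is that the paper packages the variance computation via the law of total variance (Lemma~\ref{lem:law-tot-var}), whereas you expand the square directly and kill the cross term by conditioning on $\grad{\x}$; these are the same calculation.
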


We notice that since $b$ and $\sigmagrad$ are inversely associated, we can establish a trade-off between the number of iterations and the total communication. As a matter of fact, this trade-off kicks in only at the point where the variance of the quantized gradient  estimator becomes as large as that of the stochastic gradient, as the number of iterations scales linearly with $\sigma^2 + \sigmagrad^2$.
For example, given a resolution parameter $\gridgrad > 0$, a simple gradient quantization scheme such as the one employed by QSGD, quantizes gradient entries to $\gridgrad \mathbb{Z}^n$, while guaranteeing $\sigmagrad^2 \leq \gridgrad \gradnorm$, where $\gradnorm \geq \|\grad{\x}\|_1$, and $b = O({\gradnorm}/{\gridgrad}\cdot\left(\ln n+\ln\gradnorm\right))$ bits required for communication. See a more detailed discussion in Section~\ref{sec:quantgrad}. By varying $\gridgrad$ we distinguish between the extreme cases, corresponding to the scenarios where we the quantized gradients are dense and sparse, respectively. While the total communication does not improve by varying $\gridgrad$, by doing so we are able to reduce the communication performed in each iteration, in practice~\cite{alistarh2017qsgd}.\\
\paragraph{Dense gradients:} setting $\gridgrad={\sigma^{2}}/{\gradnorm}$, we obtain exactly
the same number of iterations as in the basic case without quantized
gradients, but the communication per iteration is reduced to $O({\gradnorm^{2}}/{\sigma^{2}}\cdot\left(\ln n+\ln{\gradnorm}\right))$. \\
\paragraph{Sparse gradients:}
setting $\gridgrad=\gradnorm$, the number of iterations scales
with $\max\left\{ \sigma^{2},\gradnorm^{2}\right\} $ rather than $\sigma^{2}$,
but the pre-step communication is reduced to $O\left(\ln n + \ln \gradnorm\right)$
bits.

\subsection{Analysis Overview}
\label{sec:analysis}

Let us briefly explain the intuition behind our theoretical analyses. We view our iteration as a version of projected gradient descent, where iterates are projected onto the non-convex domain of quantized vectors.
In general, when the domain is convex, projections do not hurt convergence. But in our setting the distance to the optimal solution can increase and drastically affect the loss. However, we can show a trade-off between how much this distance increases and the ratio between the target and optimal resolution $\grid/\gridstar$.

To understand this better, consider a point $\xp$ obtained by taking a step $\xp = \qs_{\grid}(\x - \frac{1}{\beta} \nabla f(\x) )$. Using smoothness, we can verify that this significantly decreases the loss, provided that the quantization operator does not perturb its input by too much in $\ell_2$ norm. Formally, using Lemma~\ref{lem:progress-1-step} we see that
\begin{align}\label{eq:progperstep}
&f(\xp) \leq f(\x) - \frac{1}{2\beta} \|\nabla f(\x)\|_2^2
\\
&+ \frac{\beta}{2}\left\Vert \qs_{\grid}\left(\x - \frac{1}{\beta} \nabla f(\x)\right) - \left(\x - \frac{1}{\beta } \nabla f(\x)\right) \right\Vert_2^2\,.\nonumber
\end{align}
Since compared to a vanilla gradient method, this suffers a reduction in the progress made in a single iteration, we can force this to be significantly smaller, so as not to undo more than a fraction of the progress we would ideally make. To do so we notice that we can charge the last term in (\ref{eq:progperstep}) to the current error in function value, and we can make this dependence be arbitrarily small by using a finer resolution $\grid$ for our quantization grid.
This is captured by the following crucial lemma, which we prove in Section~\ref{sec:quant-err-opt-rel-shift-pf}.

\begin{restatable}{lem}{progressonestep}
\label{lem:quant-err-opt-rel-shift}Let $\gridstar>\grid>0$, such
that $\gridstar/\grid\in\mathbb{Z}$. Let $\x\in\mathbb{R}^{n}$,  and for all $r\in\intgs$, let an arbitrary $\xstar_{r,\gridstar}\in\gridstar\mathbb{Z}^n+r\one$.
Then 
\[
\mathbb{E}\left[\left\Vert \qs_{\grid}\left(\x\right)-\x\right\Vert _{2}^{2}\right]\leq\frac{\grid}{\gridstar}\mathbb{E}_{r}\left[\left\Vert \xstar_{r,\gridstar}-\x\right\Vert _{2}^{2}\right]\,.
\]
\end{restatable}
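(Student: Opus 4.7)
The plan is to compute the LHS exactly and lower-bound the RHS by explicit coordinate-wise calculations, after which the desired inequality reduces to a direct comparison slack by a factor of $\grid/\gridstar$. Both sides are essentially dithered-quantization variance computations on the appropriate lattice: the random shift ``washes out'' the dependence of rounding errors on the input and produces uniform marginals.

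First I would evaluate the LHS. For the random shift $r\sim\unif{\intg}$ used by $\qs_\grid$ and any fixed coordinate $x_i$, the quantization error equals $\grid\lfloor (x_i-r)/\grid\rceil + r - x_i = \grid(\lfloor u\rceil - u)$ where $u=(x_i-r)/\grid$. Since $r$ is uniform on $\intg$, $u$ is uniform on a unit-length interval, hence $\lfloor u\rceil - u$ is uniform on a symmetric interval of length one, and the coordinate error is uniform on a symmetric length-$\grid$ interval with variance $\grid^2/12$. By linearity of expectation the LHS equals exactly $n\grid^2/12$, regardless of the dependence across coordinates induced by the common shift. For the RHS, fix a coordinate $i$; for any choice of $\xstar_{r,\gridstar}$,
\[
\left(\xstar_{r,\gridstar,i} - x_i\right)^2 \geq \min_{k \in \mathbb{Z}} \left(\gridstar k + r - x_i\right)^2 = \gridstar^2 \left(u' - \lfloor u' \rceil\right)^2,
\]
where $u' = (x_i-r)/\gridstar$. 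Repeating the uniformity argument for $r\sim\unif{\intgs}$, the expectation of the right-hand side above equals $\gridstar^2/12$; hence by monotonicity and summing over coordinates, $\mathbb{E}_r\!\left[\|\xstar_{r,\gridstar}-\x\|_2^2\right] \geq n\gridstar^2/12$ for every (even adversarial) choice of the minimizers.

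Finally I would combine the two bounds as
\[
\mathbb{E}\!\left[\|\qs_\grid(\x)-\x\|_2^2\right] = \frac{n\grid^2}{12} \leq \frac{\grid}{\gridstar}\cdot\frac{n\gridstar^2}{12} \leq \frac{\grid}{\gridstar}\,\mathbb{E}_r\!\left[\|\xstar_{r,\gridstar}-\x\|_2^2\right],
\]
where the first inequality uses $\grid\leq\gridstar$. The main obstacle I expect is formalizing the two uniformity claims rigorously, particularly that the shifted-and-rounded fractional part is uniform on its symmetric interval independently of the input --- the standard ``dither'' fact underlying both computations. The hypothesis $\gridstar/\grid\in\mathbb{Z}$ appears unnecessary for this particular argument, though it would be natural in an alternative proof exploiting the sublattice relation $\gridstar\mathbb{Z}^n\subset\grid\mathbb{Z}^n$, in which one directly compares $\qs_\grid(\x)$ against any point in the coarser sublattice.
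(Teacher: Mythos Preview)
Your argument is correct and in fact cleaner than the paper's. The paper proceeds differently: it invokes its mean/variance lemma for $\qs_\grid$ to claim the per-coordinate variance equals $\grid^2\{x_i/\grid\}(1-\{x_i/\grid\})$, then applies a separate arithmetic inequality $(1-\{y\})\{y\}\leq k(1-\{y/k\})\{y/k\}$ for integer $k=\gridstar/\grid$ to pass from the fine to the coarse grid, and finally uses the same nearest-point lower bound you use to replace $\qs_{r,\gridstar}(\x)$ by $\xstar_{r,\gridstar}$. The integrality hypothesis enters only through that inequality.

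However, the paper's variance formula is not correct for the operator as defined: in the first line of their variance derivation the additive $+r$ in $\qs_{r,\grid}(x)=\grid\lfloor(x-r)/\grid\rceil+r$ is silently dropped, so what is computed is the variance of randomized rounding to $\grid\mathbb{Z}$ rather than to $\grid\mathbb{Z}+r$ (indeed, for $x=0$ their formula gives zero variance, whereas $\qs_{r,\grid}(0)=r$). Your dither computation is the right one: the coordinate error is exactly $\grid(\lfloor u\rceil-u)$ with $u=(x_i-r)/\grid$ uniform on a unit interval, hence uniform on a length-$\grid$ symmetric interval with second moment $\grid^2/12$, independently of $x_i$. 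With this, the comparison $n\grid^2/12\leq(\grid/\gridstar)\cdot n\gridstar^2/12$ follows from $\grid\leq\gridstar$ alone, and your suspicion that $\gridstar/\grid\in\mathbb{Z}$ is unnecessary for this lemma is justified. The paper's auxiliary inequality and the integrality assumption are artifacts of the erroneous $x$-dependent variance; once the variance is computed correctly, your route is both simpler and more general.
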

Using Lemma~\ref{lem:quant-err-opt-rel-shift}, together with the $\alpha$-PL condition, we can charge the extra error term to 
\[
\frac{\beta}{2} \cdot \frac{\grid}{\gridstar}  \mathbb{E}_r \left[  \frac{2}{\alpha}  \left( f\left(\x - \frac{1}{\beta} \nabla f(\x) \right)- f(\xstar_{r,\gridstar})\right)\right]\,,
\]
where $\xstar_{r,\gridstar}$ are picked to be the best minimizers in $\gridstar \mathbb{Z}^n + r\one$. 
To simplify the exposition and highlight the main ideas, let us assume that $\mathbb{E}_r[f (\xstar_{r,\gridstar})] = f(\xstar)$.
Since by the $\alpha$-PL condition we know that the gradient norm is large compared to the error in function value, we conclude that 
\begin{align*}
f(\xp) &- f(\xstar) \leq f(\x) - f(\xstar) - \frac{\alpha}{\beta} \left(  f(\x) - f(\xstar) \right) \\
&+ \frac{\beta}{\alpha} \cdot \frac{\grid}{\gridstar} \left( f\left(\x - \frac{1}{\beta} \nabla f(\x)\right) - f(\xstar) \right) \\
&\leq \left(f(\x) - f(\xstar)\right) \left(1 - \frac{\alpha}{\beta} + \frac{\beta}{\alpha} \frac{\grid}{\gridstar}\right)\,.
\end{align*}
This shows that by setting the $\grid \leq \gridstar \cdot (\alpha/\beta)^2/2$, in each iteration the error contracts by a $1-\Theta(\alpha/\beta)$ factor, which allows us to conclude that this algorithm converges linearly to a minimizer. We provide full proofs in Section~\ref{sec:proofs}.

\section{QSDP Implementation}
\label{sec:implementation}

\subsection{Overview}

We implemented a practical version of the QSDP algorithm described in the previous section, supporting both weight and gradient quantization, 
in Pytorch~\cite{pytorch} starting from the PyTorch FSDP support. 
Our implementation uses the CGX framework~\cite{CGX2022} as a communication backend, 
to which we added support for quantized AllGather and Reduce-Scatter collectives. 

In the original FSDP implementation, layers are packed into groups: weights and gradients of layers in the same group are concatenated before  communication. 
In QSDP, we compress layers separately, filtering out normalization layers and biases, which are communicated in full precision. 
This filtering is implemented at the level of the CGX communication backend. The quantized AllGather and Reduce-Scatter operations are implemented by leveraging peer-to-peer NVIDIA NCCL primitives. 
For multi-node (inter-server) communication, we used hierarchical versions of the algorithms, to reduce the size of inter-node transmissions.

One important optimization regards the granularity at which quantization is performed. 
Specifically, applying quantization over large tensors suffers from scaling issues, which results in accuracy degradation. 
To address this, we perform compression independently into equally-sized ``buckets'' of fixed size, 
and compress each bucket independently. 
This approach sacrifices compression by a negligible amount (as we need to transmit min-max scaling meta-information for each bucket), 
but helps avoid loss in terms of model quality. 

Bucketing (or grouping) on the weights is known to be necessary for good accuracy when quantizing \emph{pre-trained} LLMs~\cite{dettmers2022case}. 
It is also justified theoretically (Theorem~\ref{thm:mainthm}), as it both reduces compression variance, and allows us to explore solutions over finer-grained lattices. 
We observed experimentally that bucket size 1024 provides a good balance between compression and accuracy, and use it as a universal hyper-parameter.
In the context of this optimization, we observed that the impact of stochasticity in the quantization becomes minimal.

\subsection{Learned Weight Quantization}
\label{sec:learned_levels}

We now describe an additional (optional) optimization, which allows us to further reduce practical bit-width, at little to no accuracy loss. 
The motivating observation behind this optimization is that the quantization schemes we use for weights and gradients assume \emph{uniform} locations of the quantization levels. 
Yet, this uniform grid does not take the distribution of values into account. 
The idea of adapting the locations of quantization levels to the data distribution has already been studied~\cite{zhang2017zipml, faghri2020adaptive}.
However, existing dynamic-programming approaches~\citet{zhang2017zipml} have high computational cost (quadratic in the number of data points); thus, we use a fast version of gradient-descent-based optimization over the quantization levels~\cite{faghri2020adaptive}. 

 The goal of the distribution-aware quantizer in  Algorithm~\ref{algo:sgd_levels} is to select new locations for a fixed number of quantization points,  and weight values, so as to minimize the error introduced by quantization. 
 The algorithm runs iteratively across all values, finds the locations of quantization points for each value, and updates the \emph{quantization points} using the gradient descent update rule.
We run this heuristic periodically after a warmup period, separately for the weights and gradients of each layer. 
We save the derived locations of the quantization levels, and use them for quantization until the next re-computation. 

\begin{algorithm}

{\small
\caption{Gradient-based Optimization of the Levels}
\label{algo:sgd_levels}
\begin{algorithmic}[1]
\STATE {\bfseries Input:} values $V$, initial levels $Q_0$, learning rate $\alpha$.
\STATE {\bfseries Output:} optimized quantization levels $Q$.
\STATE Normalize values $V$ bucket-wise.
\FOR {each value $v_i$ from $V$}
    \STATE  $q_i = \mathsf{find\_closest}(v_, Q_i)$ \hfill // Quantize using current level
    \STATE $q_i = q_i - \alpha(q_i - v_i)$ \hfill // Update chosen quantization level
\ENDFOR
\end{algorithmic}
}
\end{algorithm}

\section{Experimental Validation}
\subsection{Experimental setup}
\paragraph{Infrastructure.} We evaluate QSDP for training GPT-scale LLMs using multiple cloud-grade Amazon EC2 \texttt{p3dn.24xlarge} machines, with 8 V100 SXM2 GPUs each. Each GPU has 32GB memory. The inter-GPU interconnect is provisioned by NVLinks with 200Gbps, while the inter-server bandwidth is of 100 Gbps. 

\paragraph{Environment and Tasks.}
We use the official NGC PyTorch 22.05-py3 Docker image with PyTorch 1.12, CUDA 11.6.2, NCCL 2.12, and the MosaicML Composer library (version 0.12), as well as a fork of the CGX communication library~\cite{CGX2022}.
All experiments were run with with MosaicML Large Language Models implementation~\cite{MosaicMLBench}. The benchmarks run the pre-training of different version of GPT-family models~\cite{radford2018improving, brown2020language}, varying the sizes of the models, on the C4 dataset~\cite{2019c4dataset}. 
Specifically, we examine accuracy on GPT models with 125M, 350M, and 1.3B parameters. 
For  benchmarks, we use 4 servers with 8 GPUs each. See Appendix~\ref{app:training_details} for training details.

\paragraph{Baselines.}
As a baseline we use training with default parameters, which is already highly-optimized by MosaicML~\cite{MosaicMLBench}. 
We note that directly using INT8 quantization, without bucketing, resulted in very poor accuracy, and therefore we do not use it as a baseline. 
In terms of communication, the baseline transmits weights in full (FP32) precision, and gradients in half (FP16) precision. In QSDP experiments, we \emph{do not} modify any hyperparameters. We convert gradients to full precision before quantization. 
For all timing experiments, the reported numbers are averaged over 50 training steps after warm-up of 10 iterations.
Our main accuracy measure is \emph{perplexity}, which is known to be a very stringent accuracy measure in this setting, and correlates extremely well with zero-shot performance~\cite{dettmers2022llm}.

\paragraph{Accuracy Recovery.} We first examine the effect of quantization on model quality, i.e. final model perplexity, in the end-to-end experiments. The default bit-width for weights and gradients quantization is 8 bits, using 1024 bucket size, which we illustrate as W8G8. 
We communicate normalization layers and biases in full precision.
We emphasize that straightforward round-to-nearest or stochastic quantization \emph{does not converge} to reasonable final perplexity in this setup: 
Naive quantization without bucketing loses more than 2 units of perplexity on GPT-125M, a model on which W8G8 with 1024 bucket size \emph{improves} perplexity. 

The accuracy results are presented in Table~\ref{tab:model_ppls}. 
The QSDP final perplexity is almost identical to that of regular training, and  QSDP can even \emph{slightly improve} the baseline accuracy. We stress that we did not perform any parameter tuning: quantization parameters are the same across all layers.

\paragraph{End-to-end Speedup.}
For end-to-end training speedup improvements, we use multi-node GPT pretraining under standard hyperparameters. 
We examine speedup for different inter-node bandwidths: 10 Gbits, 50 Gbits and 100 Gbits. 
For that, we artificially reduce input-output bandwidth on each node, using the UNIX \texttt{tc} tool~\cite{tcunix}. The results are presented in Figure~\ref{fig:training_times}. 
First, please notice that standard FSDP training has a non-trivial bandwidth bottleneck even at 100Gbps bandwidth as we increase model size, and that this bandwidth bottleneck can dominate training time on the lower 10Gbps bandwidth. 
Second, the running time of QSDP is \emph{essentially constant} across all three scenarios, showing that it has essentially removed the bandwidth bottleneck. 
More precisely, QSDP outperforms the baseline by up to 15\% in the 100Gbps scenario (a non-trivial reduction of 12 hours of training time or 1.5k\$ of cloud costs\footnote{price of 12 hours training on 4 AWS p3dn.24xlarge instances}), and by 2.25x in the 10Gbps scenario.

\begin{figure}[t]
 \centering
 \includegraphics[width=0.43\textwidth,trim={0.2cm 0.29cm 0 0},clip]{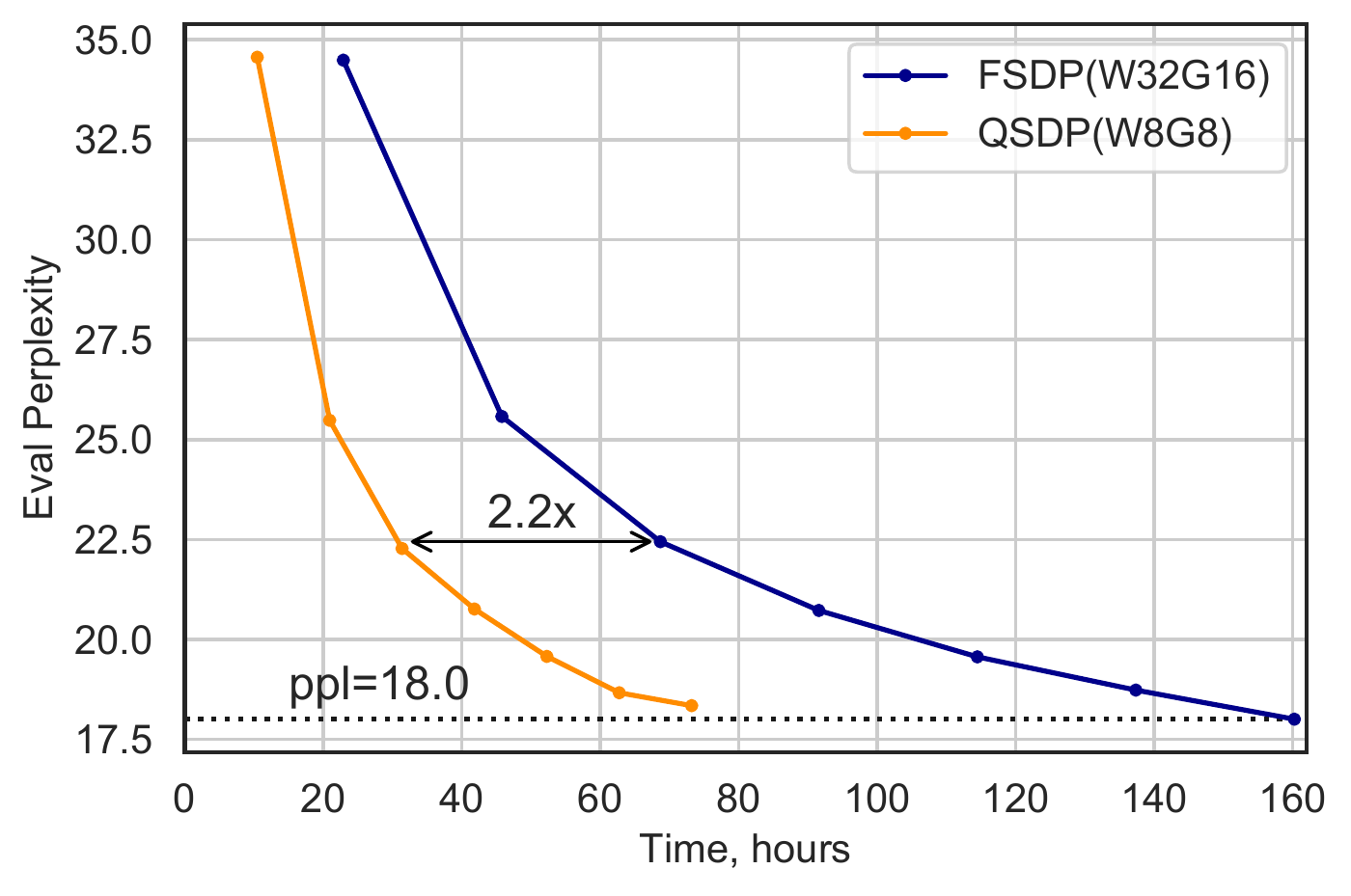}
 \caption{Perplexity vs time for standard FSDP (FP32 weights and FP16 gradients) and QSDP (both weights and gradients quantized to 8 bits) for the 1.3B model in the 10Gbps bandwidth setup.}
  \label{fig:ppl_vs_time}
\end{figure}

\paragraph{Learned quantization.}
We examined the performance of learned quantization for the small 125M parameters model. 
We ran the optimization algorithm after 400, 1900 and 3800 training steps, and noticed that optimizing the locations of quantization levels has no effect for bit-widths higher than 6 bits, but leads to noticeable improvements for lower bit-widths. Please see Table~\ref{tab:learned_levels}. 
Learned weight quantization allows to improve the final model performance for different weight and gradient quantization parameter pairs, reaching perplexities that are close to the baseline. 
Specifically, using learned quantization results in reaching the highest compression ratio for weights and gradient in training (i.e. 5 and 4 bits respectively) without substantial accuracy impact. We expand upon these experiments in Appendix~\ref{app:learned_levels}.

\begin{table}[h]
\caption{Perplexities recoveries for different models end-to-end training using QSDP. Weights and gradients quantized to 8 bits, uniform quantization.}
\label{tab:model_ppls}
\vskip 0.15in
\begin{center}
\begin{small}
\begin{tabular}{|l|c|c|c|}
\hline
 & 125M & 350M & 1.3B \\ \hline
Baseline & 35.81 & 23.94 & 18.00 \\ \hline
QSDP & 35.58 & 23.95 & 18.34 \\ \hline
\end{tabular}
\end{small}
\end{center}
\vskip -0.1in
\end{table}

\begin{table}[h]
    \caption{Final perplexities of training 125m GPT-2 model with combinations of weights and gradients low-bits uniform (not learned) quantization.}
    \label{tab:ppl_125m}
    \begin{center}
    \begin{small}
    \begin{tabular}{|c|c|c|c|c|c|}
    \hline
    \backslashbox{Weights bits}{Gradients bits} & 6 & 5 & 4 \\ \hline
    6 & 35.74 & 36.08 & 35.84 \\ \hline
    5 & 36.01 & 35.94 & 36.36  \\ \hline
    4 & 37.11 & 37.38 & 37.61 \\ \hline
    \end{tabular}
\end{small}
\end{center}
\end{table}

\begin{figure}[t]
 \centering
 \includegraphics[width=0.43\textwidth,trim={0.2cm 0.28cm 0 0.1cm},clip]{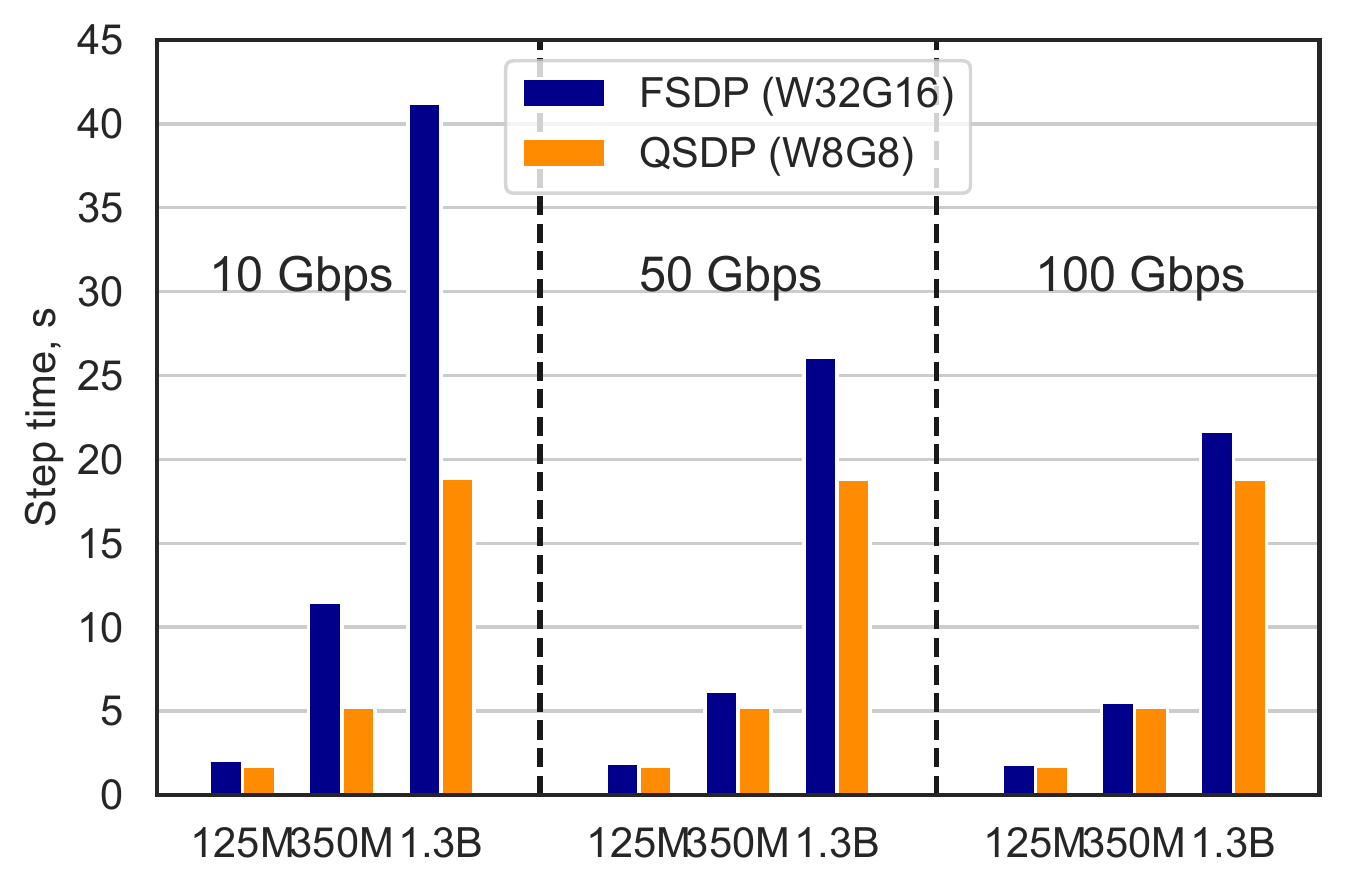}
 \caption{Training step time for different models at various inter-node bandwidth with and without QSDP enabled. The fact that QSDP step time is constant across considered bandwidths means that QSDP successfully tackles bandwidth bottlenecks.}
 \label{fig:training_times}
\end{figure}

\begin{table}[h]
\caption{ 
Final perplexities of low-bits quantization of 125m GPT-2 model using the learned quantization levels. Learned quantization in the W6G4 configuration provides lower perplexity than the baseline.
}
\label{tab:learned_levels}
    \begin{center}
    \begin{small}
    \begin{tabular}{|c|c|c|c|c|c|}
    \hline
            & baseline               & w6g4  & w5g4  & w4g4  & w4g32  \\ \hline
    Uniform & \multirow{2}{*}{35.81} & 35.81 & 36.34 & 37.61 & 37.11  \\ \cline{1-1} \cline{3-6}
    Learned &                        & \textbf{35.75} & \textbf{36.01} & \textbf{36.94} & \textbf{36.55} \\ \hline
    \end{tabular}
\end{small}
\end{center}
\end{table}

\section{Conclusion}

Motivated by the efficient distributed training of large language models, we have explored the feasibility of fully-quantized training for such models, with respect to both weights and gradients. 
This led to an interesting new analysis, showing that SGD can indeed converge with strong convergence guarantees even with quantized iterates, as long as a good quantized solution exists. 

Complementing this analysis, we proposed QSDP, a quantized an extension of the popular Fully Sharded Data Parallel (FSDP) distributed training approach, in which \emph{all transmitted state is in quantized form}. 
We also provided a highly-efficient implementation of QSDP in Pytorch, which we showed to  successfully eliminate the bandwidth bottleneck in large-scale distributed training of modern language models, without sacrificing accuracy. 
Specifically, our experimental validation across three model sizes showed that training with QSDP reaches up to 2.2$\times$ speedup.

Our results suggest that communication compression can  be an effective tool in the context of novel distribution schemes motivated by large-scale training. Specifically, 
we believe we are the first to show both convergence guarantees and strong practical performance for simple \emph{weight compression} schemes being applied during SGD-based training, which should motivate further work in this direction. In particular, an interesting extension of our work would be to examine whether the lower-precision weight representation can also be exploited for faster runtimes.

\section*{Acknowledgements}
AV acknowledges the support of the French Agence Nationale de la Recherche (ANR), under grant ANR-21-CE48-0016 (project COMCOPT), the support of Fondation Hadamard with a PRMO grant, and the support of CNRS with a  CoopIntEER IEA grant (project ALFRED).

\bibliographystyle{plainnat}
\bibliography{references}

\newpage
\appendix

\section{Training details.}
\label{app:training_details}
For training of GPT-2 models we were using MosaicML~\cite{MosaicMLBench} examples. The global batch size for 125M and 350M models was 256, for 1.3B - 512, resulting in 4 gradient accumulations at each iteration. 
For all models AdamW optimizer was used, the optimizer parameters are presented in the Table~\ref{tab:opt_params}. 125M model was trained in 4800 steps, 350M model in 13400 steps, 1.3B model in 14000 steps.

\begin{algorithm}[t]
{\small
\caption{Pseudocode of QSDP for a Fixed Layer}
\label{algo:qsdp}
\begin{algorithmic}[1]
\STATE {\bfseries Input:} worker $p$, layer input $x_p$, worker weight partition $w_p$.

\FUNCTION{\textbf{ExecuteForwardPass}}
    \STATE \textcolor{blue}{$qw_p \gets \mathsf{QuantizeWeights}( w_p )$} \hfill// Quantize $p$'s weights 
    \STATE $qw \gets  \textcolor{orange}{\mathsf{AllGather}( qw_{i} \textnormal{ for all $i$ } )}$ \hfill // Collect quantized weights 
    \STATE $o_p \gets \mathsf{Layer}( qw, x_p )$ \hfill // Compute output for $p$
    \STATE \textcolor{gray}{$\mathsf{free}(qw)$} \quad \hfill // Discard aggregated layer weights
\ENDFUNCTION

\FUNCTION{\textbf{ExecuteBackwardPass}}
    \STATE \textcolor{blue}{$qw_p \gets \mathsf{QuantizeWeights}( w_p )$}  \hfill // Quantize $p$'s weights 
    \STATE $qw \gets \textcolor{orange}{\mathsf{AllGather}( qw_{i} \textnormal{ for all $i$ } )}$ \hfill // Collect quantized weights 
    \STATE $g_p \gets \mathsf{Gradient}( qw, o_p )$ \hfill // Compute gradient for $p$
    \STATE \textcolor{gray}{$\mathsf{free}(qw)$} \quad \hfill // Discard aggregated layer weights
    \STATE \textcolor{blue}{$qg_p \gets \mathsf{QuantizeGradients}( g_p )$}  \hfill // Quantize $p$'s gradient
    \STATE $qg_p \gets \textcolor{orange}{\mathsf{ReduceScatter}( qg_{i} \textnormal{ for each $i$} )}$ \hfill // Distribute gradients
    \STATE $w_p \gets  \mathsf{WeightUpdate}(qg_p, w_p) $ \hfill // Update $p$'s weights
    \STATE \textcolor{gray}{$\mathsf{free}(qg)$} \quad \hfill // Discard aggregated gradients
\ENDFUNCTION

\end{algorithmic}
}
\end{algorithm}

\begin{table}[h]
\caption{ AdamW optimizer parameters.}
\label{tab:opt_params}
    \begin{center}
    \begin{small}
    \begin{tabular}{|c|c|c|c|}
    \hline
            &125M  & 350M  & 1.3B  \\ \hline
    learning rate & 6e-4 & 3e-4 & 2e-4 \\ \hline
    betas & 0.9, 0.95 & 0.9, 0.95 & 0.9, 0.95 \\ \hline
    epsilon & 1e-8 & 1e-8 & 1e-8 \\ \hline
    \end{tabular}
\end{small}
\end{center}
\end{table}

\section{Network overhead experiments}
In order to evaluate the effect on communications in FSDP training we conducted the synthetic experiment which reduces the bandwidth costs in each iteration. Specifically, given the buffer of size $N$ which is about to be communicated, and compression ratio $\gamma$ we only transmit the first $N/\gamma$ elements. The results for our setup (4 x 8V100-32G GPUs) at different internode bandwidths is shown in the Figures.\ref{fig:fake_compression}, communication weights and gradients are reduced to the same compression ratio. We see that the most effect of compression is reached as expected for the largest 1.3B model and at lowest bandwidth. However, one can get around 80\% speedup at high bandwidth when up to 8x compression ratio is applied. Also, we notice that 8$\times$ compression almost reaches the ideal scaling for large model and has a evident overhead over the no-communication training in case of the small model. It infers that the large models have a bottleneck in bandwidth component of the communication and the small model has a dominating latency part.

To see the variance of the compression effects on weights and gradients we conducted the similar experiment for different combinations of compression ratio pairs (see ~\ref{tab:fake_1b}). We observe that weight compression gives more performance profits than gradient compression. This can be naturally explained by the fact that weights are communicated more frequently than gradients in FSDP (in this particular experiment weights are communicated 5 times per one gradient exchange) and the amount of transmissions per communication is similar.

The difference between the synthetic experiment and QSDP performance numbers with the same compression ratios can be justified by the performance inefficiency of NCCL point-to-point communication primitives on which QSDP compressed communication is based on - the compression overhead in our experiments was verified to be negligible (less than 1\% per iteration).

\begin{figure*}[h]
    \centering
    \setkeys{Gin}{width=0.33\linewidth}    
    \subfloat[125M\label{fig:fake-125m}]{\includegraphics{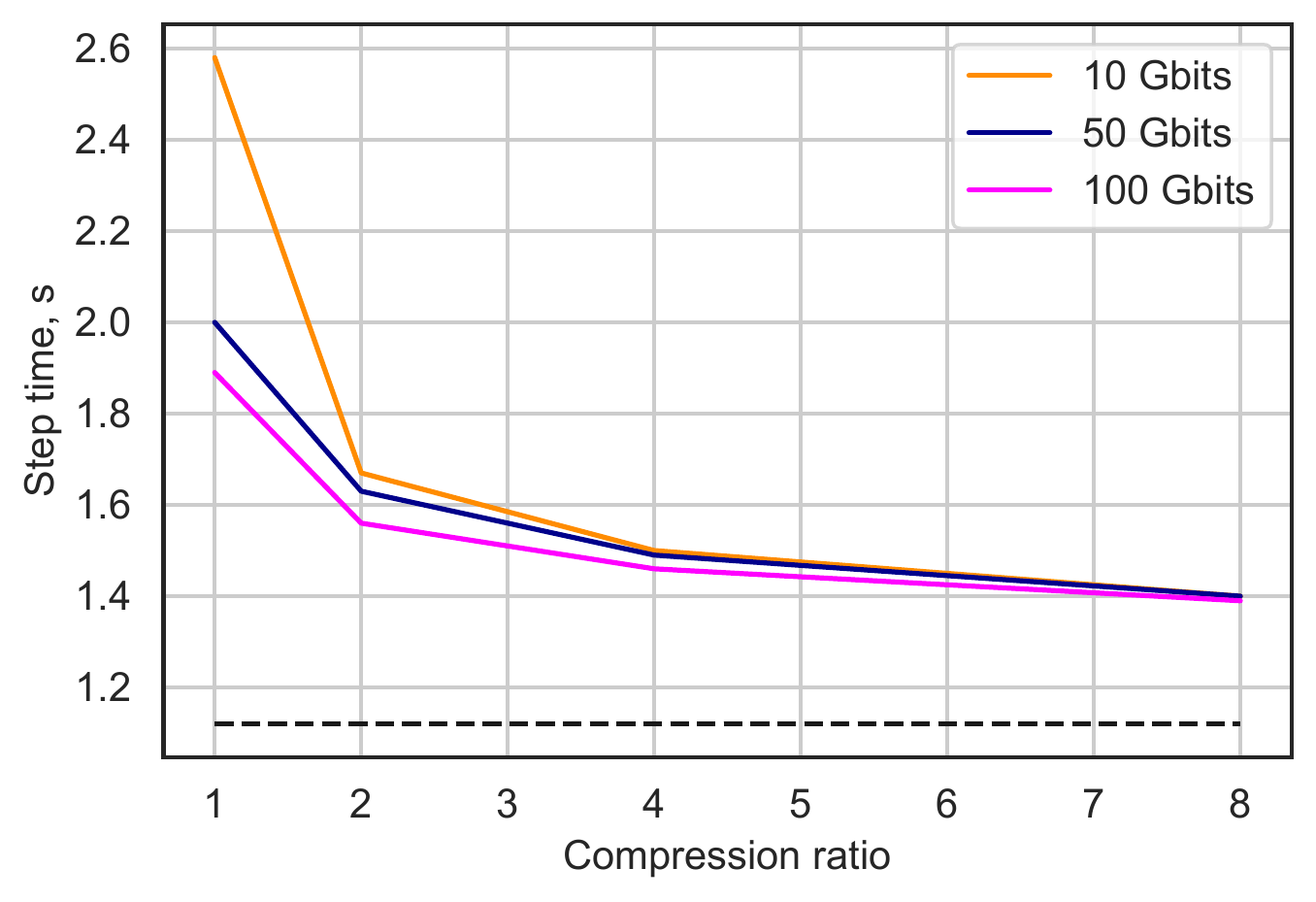}}\subfloat[350M\label{fig:fake-350m}]{\includegraphics{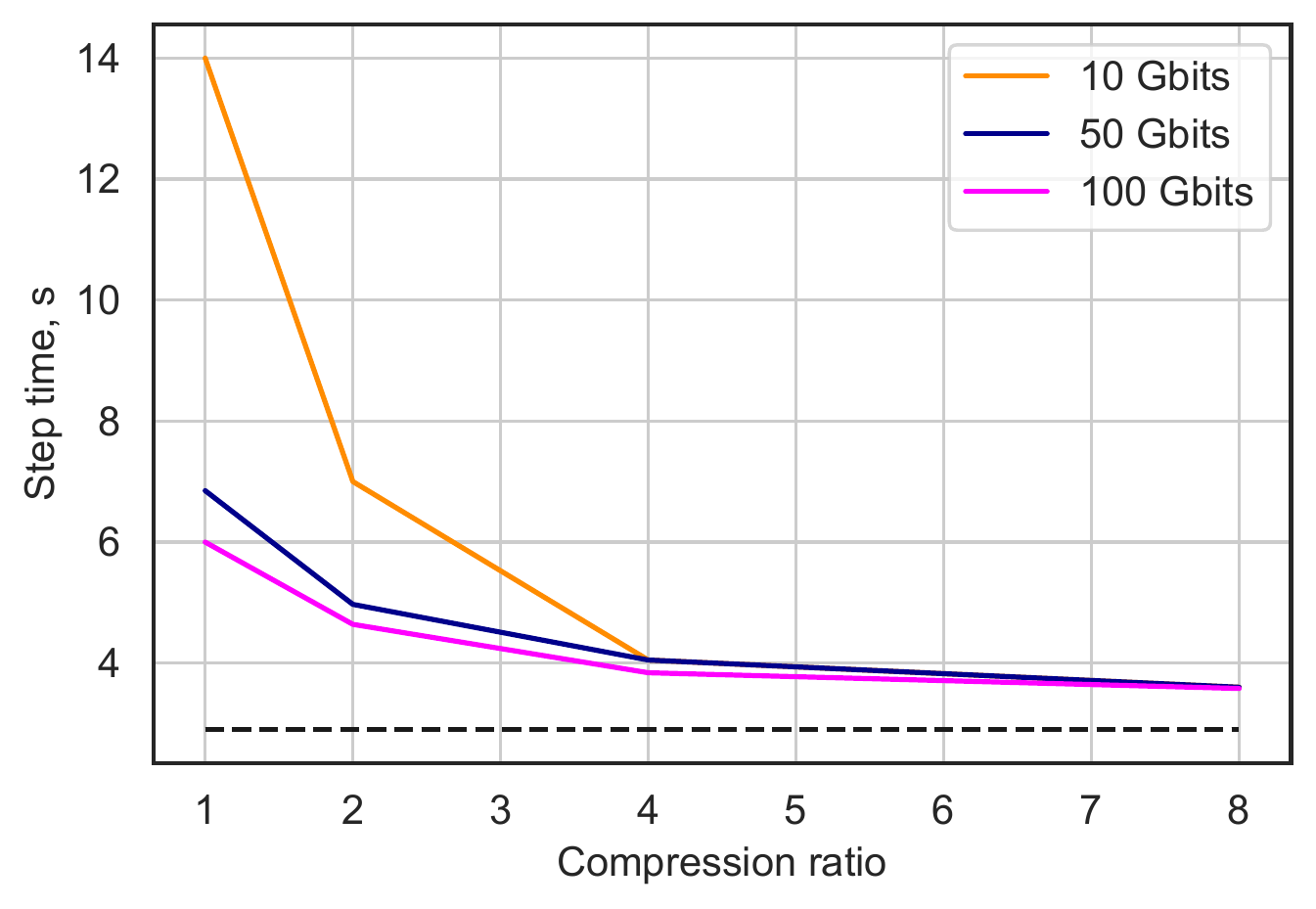}} \subfloat[1.3B \label{fig:fake-1b}]{\includegraphics{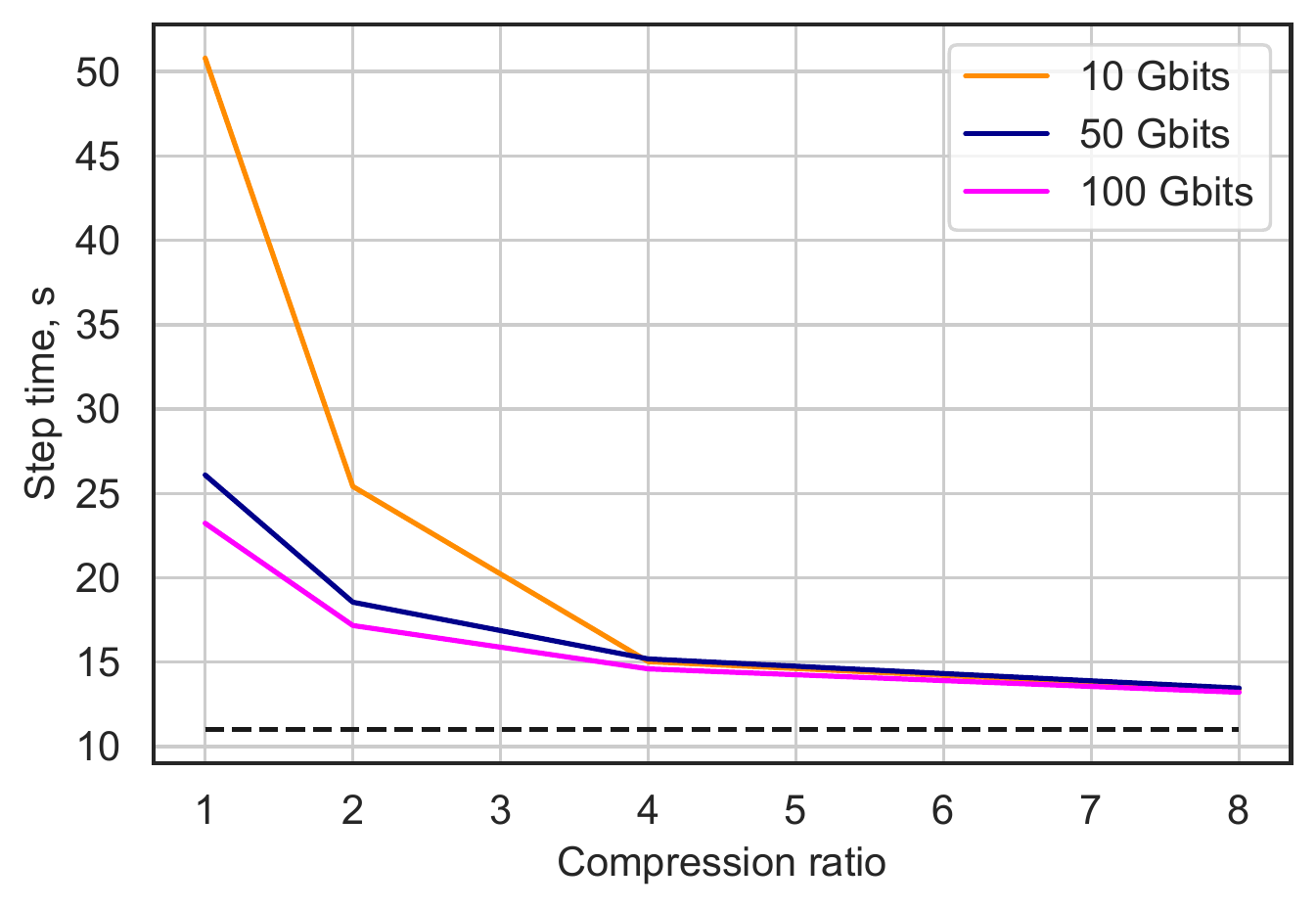}}
    \caption{{\small Compression vs average step time for different models at different inter-node bandwidths with fake compression (weights and gradients have the same compression ratio). Lower is better. The dashed line represents ideal scaling - training without communication.}}
    \label{fig:fake_compression}
 \end{figure*}

\begin{table}[h]
    \caption{Training step timings (in seconds) for 1.3B model at 100 Gbps bandwidth with various combinations of weights and gradient compression ratio.}
    \label{tab:fake_1b}
    \begin{center}
    \begin{small}
    \begin{tabular}{|c|c|c|c|c|c|c|}
    \hline
    \backslashbox{Weights ratio}{Gradients ratios} & 1 & 2 & 4 & 8 \\ \hline
    1 & 23.23 & 21.36 & 20.62 & 20.2   \\ \hline
    2 & 19.27 & 17.17 & 16.26 & 15.95  \\ \hline
    4 & 17.50 & 15.35 & 14.6  & 14.08  \\ \hline
    8 & 16.62 & 14.52 & 13.66 & 13.21  \\ \hline
    \end{tabular}
\end{small}
\end{center}
\end{table}

\section{Learned quantization}
\label{app:learned_levels}
We implemented stochastic gradient descent optimization of quantization levels in PyTorch. We use learning rate 0.01, batch size 1024. We run the learning for each layer larger than 1e5 parameters, for other layers uniform quantization was used.
We evaluate the quality of quantization levels by comparing L2 of compression error introduced by quantizing a buffer using the levels. We conducted the such evaluation for weights quantized to 5 bits and gradients quantized to 4 bits during the training of GPT 125M model. The results for one of the attention layers and LM head layer are shown in the Figures~\ref{fig:learned_compression_error_attn} and ~\ref{fig:learned_compression_error_lm}. The dashed vertical lines show the moment of running learning quantization levels algorithm. We see that compression error of learned quantization levels is constantly lower for the learned algorithm, and the lower bits-width (for gradients we use 4 bits quantizaton) the larger the gap between the considered methods. Also, we see that the compression error of the learned quantization only increases in sync with uniform quantization over time. It means that learning algorithm can be run only once, at the start of the training.

Also, we measured overhead of running learning algorithm for GPT 125M with weights quantized to 5 bits, gradients to 4 bits. The overhead of learning algorithm amounts to around 9 minutes, whereas the full training takes lasts 5 hours.

The extra experiments results with low bit-width quantization are shown in the Table.~\ref{tab:app_learned_levels}. The number doesn't show full perplexity recovery but they represent the improvements achieved by learned levels algorithm. We can see that with learned quantization levels one can reduce up to 3 units of perplexity.

\begin{figure*}[h]
    \centering
    \setkeys{Gin}{width=0.5\linewidth}    
    \subfloat[Weights]{\includegraphics{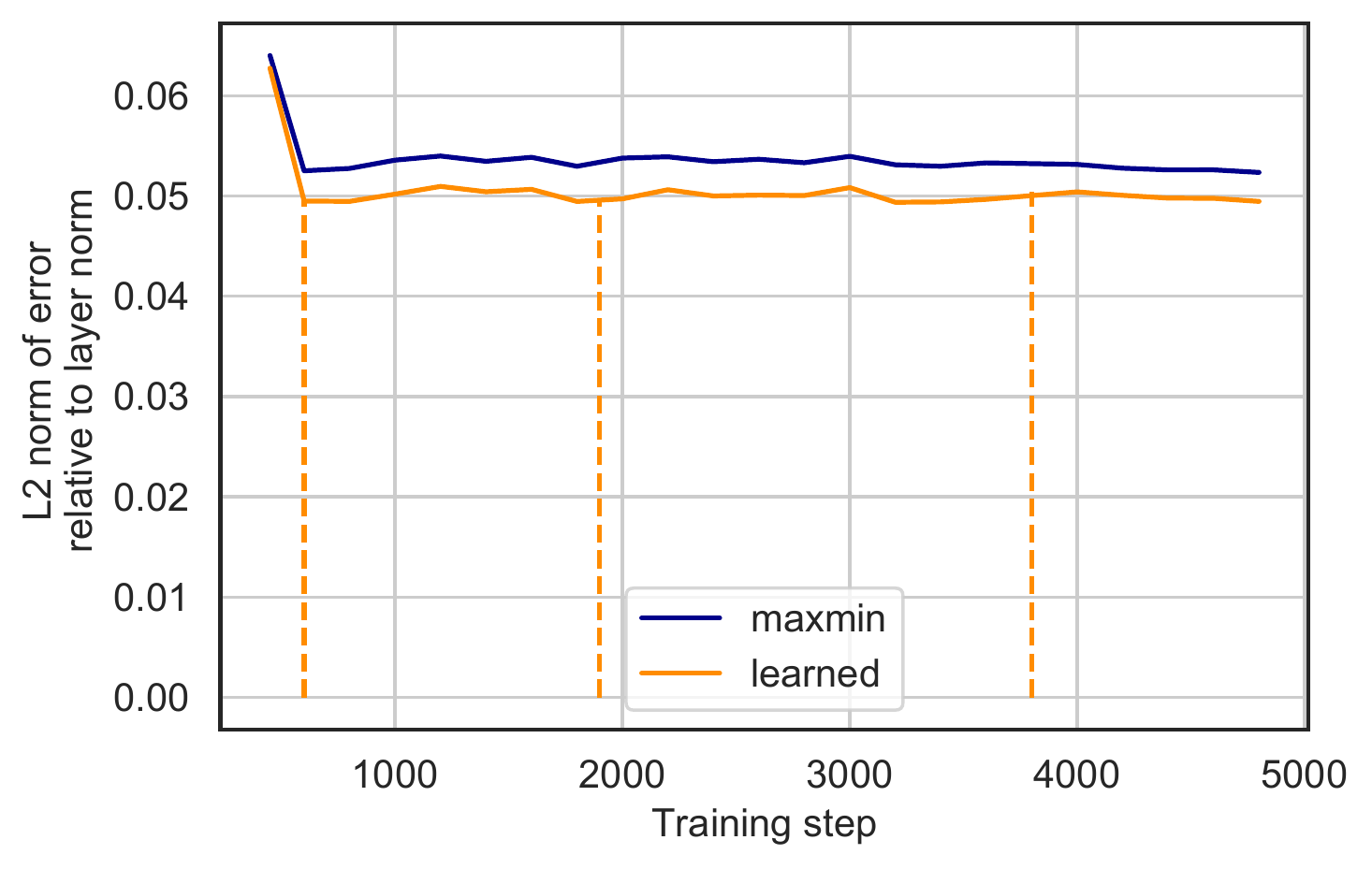}}\subfloat[Gradients]{\includegraphics{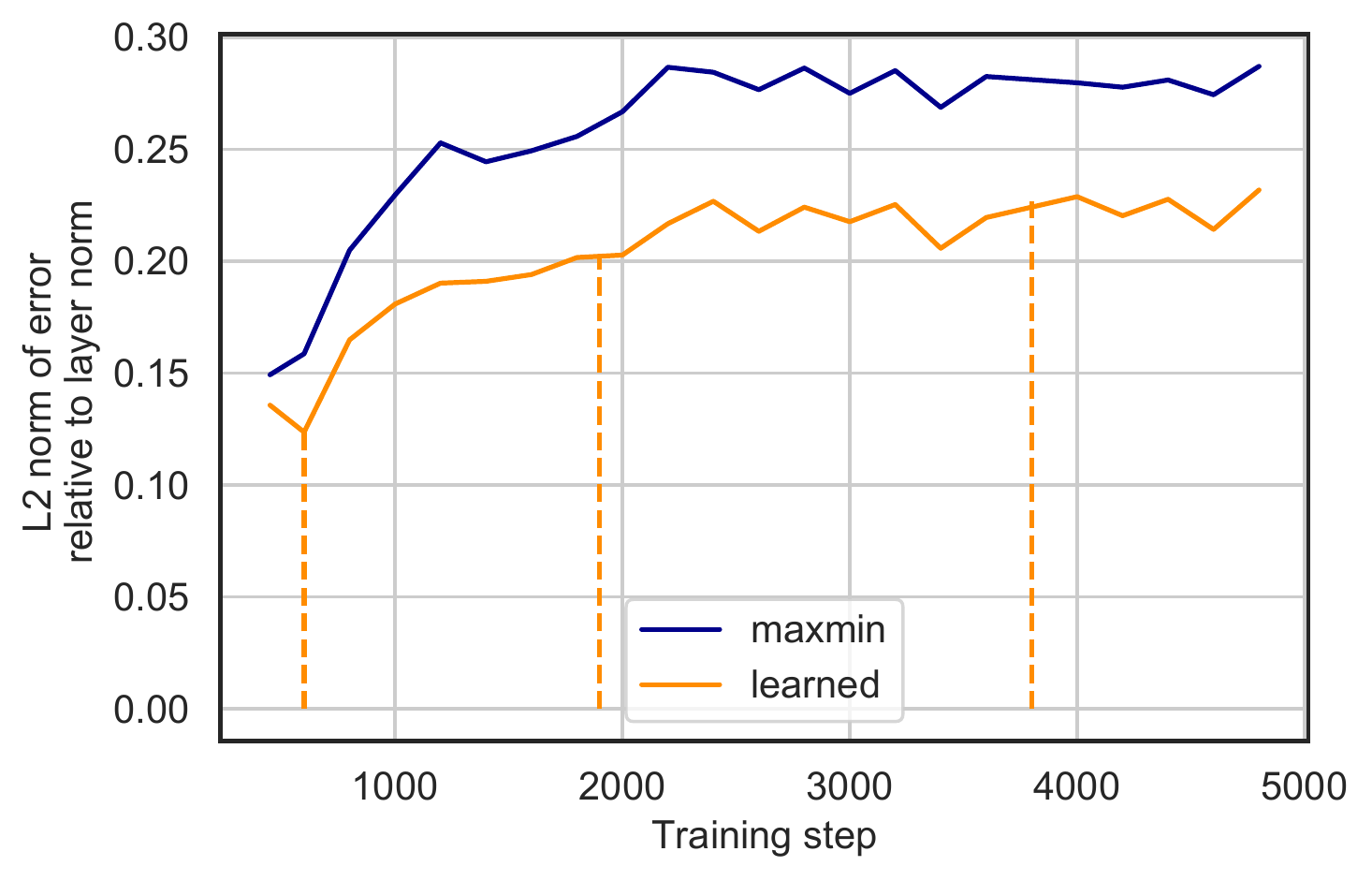}} \caption{{\small Compression error (L2 norm of the error relative to L2 norm of the input) comparison with learned quantization levels for attention layer of 125M model, W5G4 quantization.}}
    \label{fig:learned_compression_error_attn}
 \end{figure*}

\begin{figure*}[h]
    \centering
    \setkeys{Gin}{width=0.5\linewidth}    
    \subfloat[Weights]{\includegraphics{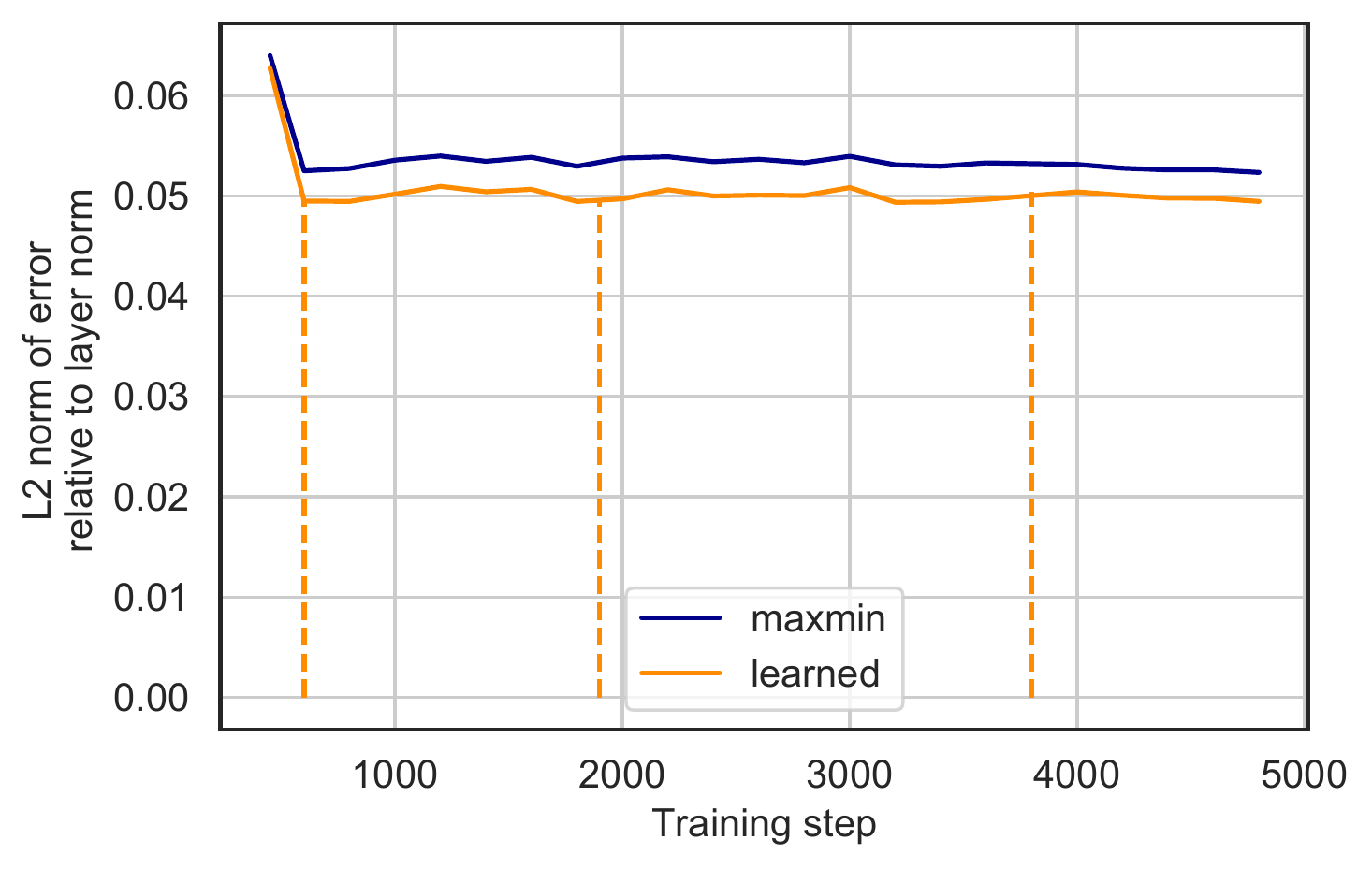}}\subfloat[Gradients]{\includegraphics{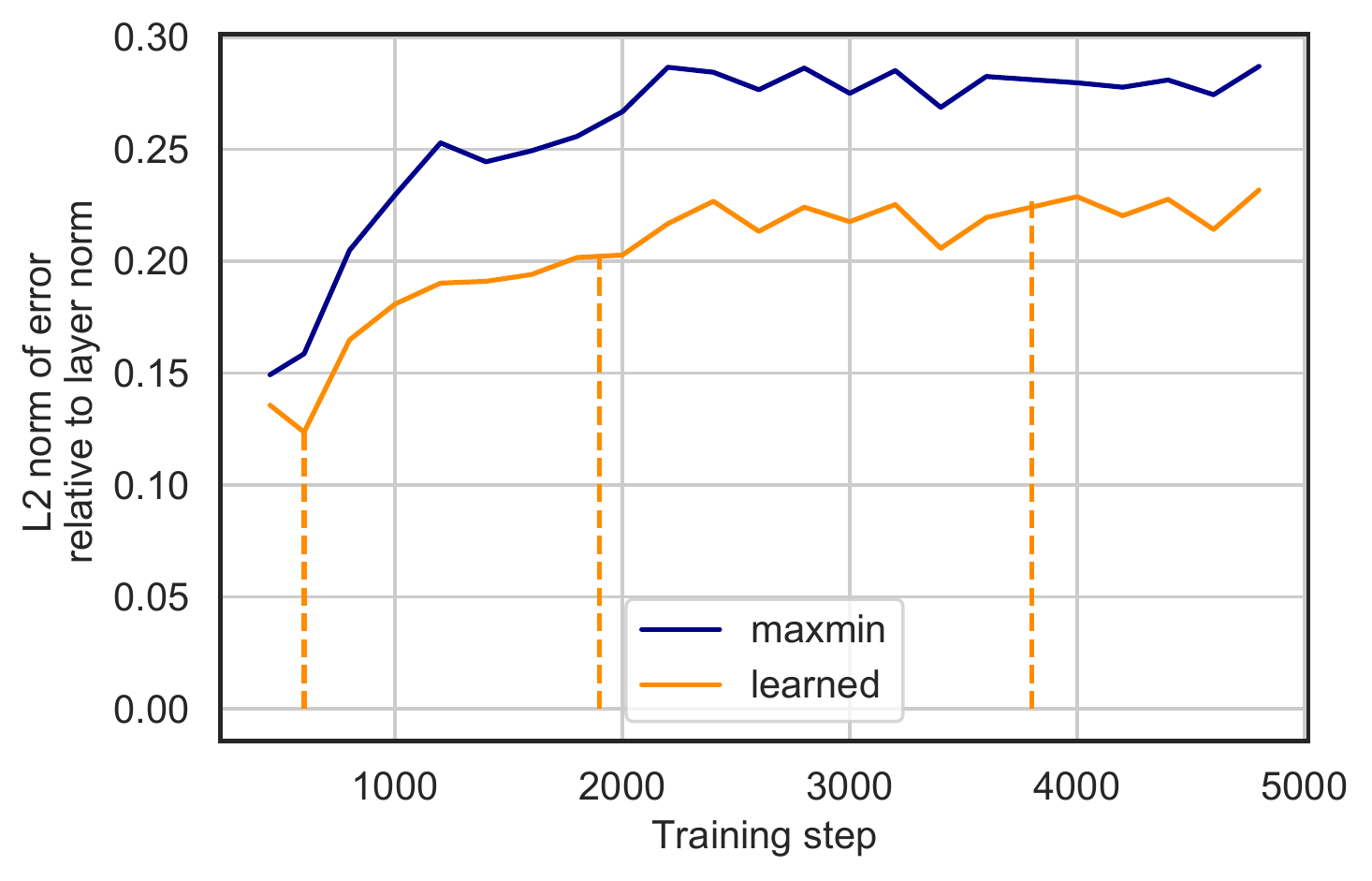}} \caption{{\small Compression error (L2 norm of the error relative to L2 norm of the input) comparison with learned quantization levels for LM-Head layerof 125M model, W5G4 quantization.}}
    \label{fig:learned_compression_error_lm}
 \end{figure*}

\begin{table}[h]
\caption{ 
Final perplexities of low-bits quantization of 125m GPT-2 model using the learned quantization levels.
}
\label{tab:app_learned_levels}
    \begin{center}
    \begin{small}
    \begin{tabular}{|c|c|c|c|c|c|}
    \hline
            & baseline               & w3g32  & w2g32  & w8g3  & w8g2  \\ \hline
    Uniform & \multirow{2}{*}{35.81} & 45.53 &  57.92 & 39.91  & 44.79  \\ \cline{1-1} \cline{3-6}
    Learned &                        & 42.31 &  56.54 & 37.72 & 44.65  \\ \hline
    \end{tabular}
\end{small}
\end{center}
\end{table}

\section{Convergence Proofs}
\label{sec:proofs}

In this section we provide the convergence analysis for our algorithms.

\subsection{Overview}

We use the notation and assumptions defined in Section \ref{sec:background}.
As all of our analyses revolve around bounding the progress made in
a single iteration, to simplify notation we will generally use $\x$
to denote the current iterate, and $\xp$ to denote the iterate obtained
after the generic update:
\[
\xp=\qs_{\grid}\left(\x-\frac{\eta}{\beta} \grad{\x}\right)\,,
\]
where $\beta$ is the smoothness parameter of $f$. In Section~\ref{sec:SGD-weight-quant} we will first prove convergence for the deterministic method, where we have direct access to the gradients of $f$. The analysis precisely follows the steps we described in Section~\ref{sec:background}.
Then, we extend this analysis to the case of stochastic gradients, and provide the full proof for Theorem~\ref{thm:mainthm}.
Finally, in Section~\ref{sec:quantgrad} we show that given an appropriate gradient quantization method with bounded variance, we can use it on top of our iteration to further reduce the required amount of communication, and thus prove Corollary~\ref{cor:qwqg}.

Before proceeding, we first formally analyze the quantization method we defined in Section~\ref{sec:background}, and show some additional properties that will be useful later.

\begin{lem}
\label{lem:mean-var-T}Let $\v\in\mathbb{R}^n$, and let $\grid>0$.
Then,
\begin{align*}
\mathbb{E}\left[\qs_{\grid}\left(\v\right)\right] & =\v\,,\\
\mathbb{E}\left[\left\Vert \qs_{\grid}\left(\v\right)-\v\right\Vert_2^{2}\right] & =\grid^{2}\cdot \sum_{i=1}^n \left\{ \frac{v_i}{\grid}\right\} \left(1-\left\{ \frac{v_i}{\grid}\right\} \right)\,,\\
\mathbb{E}\left[\left\Vert \qs_{r,\grid}\left(\v\right)-r\one\right\Vert _{0}\right] & \leq\left\Vert \v\right\Vert _{1}/\grid\,.
\end{align*}
\end{lem}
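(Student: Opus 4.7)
The plan is to reduce every identity to a one-dimensional calculation on the $\grid$-periodic rounding-error function $\phi(u):=\grid\lfloor u/\grid\rceil-u$, which equals $-u$ on the central period $[-\grid/2,\grid/2]$. Directly unfolding the definition of $\qs_{r,\grid}$ yields, coordinate-wise,
\[
\qs_{r,\grid}(v_i)-v_i \;=\; \phi(v_i-r), \qquad \qs_{r,\grid}(v_i)-r \;=\; \grid\,\lfloor (v_i-r)/\grid\rceil,
\]
and since $r\sim\unif{\intg}$ ranges over a full period of $\phi$, the distribution of $\phi(v_i-r)$ matches that of $-U$ with $U\sim\unif{\intg}$, independently of $v_i$. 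This yields unbiasedness immediately: $\mathbb{E}[\phi(v_i-r)]=\mathbb{E}[-U]=0$, and linearity of expectation gives $\mathbb{E}[\qs_\grid(\v)]=\v$.

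For the variance identity I would expand $\mathbb{E}\|\qs_\grid(\v)-\v\|_2^2 = \sum_i \mathbb{E}[\phi(v_i-r)^2]$ and evaluate each term by splitting $r$ into the two sub-intervals on which $(v_i-r)/\grid$ rounds to $\lfloor v_i/\grid\rfloor$ versus to $\lceil v_i/\grid\rceil$; letting $\theta_i=\{v_i/\grid\}$, these sub-intervals have lengths $\grid(1-\theta_i)$ and $\grid\theta_i$, and on each $\phi$ is affine with slope $-1$ (up to an integer shift), so the two elementary quadratic integrals can be computed in closed form and combined to recover the per-coordinate contribution stated in the lemma.

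For the $\ell_0$ bound I would use the second identity displayed above: $\qs_{r,\grid}(v_i)\neq r$ exactly when $|v_i-r|>\grid/2$, so $\mathbb{E}\|\qs_{r,\grid}(\v)-r\one\|_0 = \sum_i \Pr_r[|v_i-r|>\grid/2]$. A short case analysis separating $|v_i|\geq\grid$ (where the probability is at most $1\leq |v_i|/\grid$) from $|v_i|<\grid$ (where a direct integration on the uniform $r$ yields exactly $|v_i|/\grid$) gives the per-coordinate bound $\min(1,|v_i|/\grid)\leq |v_i|/\grid$, summing to $\|\v\|_1/\grid$.

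The main obstacle I anticipate is the variance calculation: the boundary between the two integration sub-intervals moves with $v_i$ and jumps when $\theta_i$ crosses $1/2$, so one has to set up the limits of integration and combine the piecewise-quadratic contributions carefully to recover a clean $\{v_i/\grid\}(1-\{v_i/\grid\})$-type form rather than a messier mixture of boundary terms. Once a single coordinate is handled, the multi-dimensional statements follow at once by linearity of expectation, and the $\ell_0$ bound is then purely an elementary probability estimate.
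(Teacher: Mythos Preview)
Your plan for the mean and for the $\ell_0$ bound is sound and essentially the same as the paper's. The gap is in the variance step.

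You already have the variance in hand in your first paragraph: since $\phi(v_i-r)$ is distributed as $-U$ with $U\sim\unif{\intg}$ \emph{independently of $v_i$}, it follows at once that
\[
\mathbb{E}\big[\phi(v_i-r)^2\big]=\mathbb{E}[U^2]=\frac{\grid^2}{12}
\]
for every coordinate. This cannot equal $\grid^2\{v_i/\grid\}(1-\{v_i/\grid\})$, so the two ``elementary quadratic integrals'' you plan to combine will not recover the formula stated in the lemma. Concretely, with $\theta=\{v_i/\grid\}$ and after the natural substitutions $s=r+\grid(1-\theta)$ and $s=r-\grid\theta$ on the two sub-intervals, the pieces glue to $\tfrac{1}{\grid}\int_{-\grid/2}^{\grid/2}s^2\,ds=\grid^2/12$, not to $\grid^2\theta(1-\theta)$.

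The discrepancy is not in your method but in the stated identity. The paper's own variance computation silently drops the ``$+r$'' when it passes from $\qs_{r,\grid}(x)-x=\qs_{0,\grid}(x-r)+r-x$ to $\qs_{0,\grid}(x-r)-x$ in its first displayed line, so what it actually evaluates is $\mathbb{E}_r\big[(\grid\lfloor(x-r)/\grid\rceil-x)^2\big]$. That quantity \emph{does} equal $\grid^2\{x/\grid\}(1-\{x/\grid\})$, because $\lfloor(x-r)/\grid\rceil$ takes only the two values $\lfloor x/\grid\rfloor$ and $\lfloor x/\grid\rfloor+1$ with probabilities $1-\{x/\grid\}$ and $\{x/\grid\}$; but it is not the quantity appearing in the lemma statement. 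Your periodicity observation is correct and, read carefully, exposes this slip rather than confirming the stated formula.
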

Since the proofs are technical, we defer them to Section~\ref{sec:mean-var-T-pf}. The most important feature of this quantization scheme is captured by Lemma~\ref{lem:quant-err-opt-rel-shift}, which is crucial for our convergence proof. We first restate it, and prove it formally in Section~\ref{sec:quant-err-opt-rel-shift-pf}.

\progressonestep*

The proof crucially relies on the fact that $\grid/\gridstar \in \mathbb{Z}$, and is rooted in the following inequality:
\begin{lem}\label{lem:ineq} Let $y \in \mathbb{R}$ and $k\in\mathbb{Z}$. Then
\[
\left(1-\left\{ y\right\} \right)\left\{ y\right\} \leq k\left(1-\left\{ \frac{y}{k}\right\} \right)\left\{ \frac{y}{k}\right\} \,.
\]
\end{lem}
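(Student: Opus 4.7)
The plan is to reduce the stated inequality to verifying non-negativity of a single quadratic in one real variable. I assume throughout that $k \geq 1$ is a positive integer; this is the case of interest (with $k = \gridstar/\grid \in \mathbb{Z}_{>0}$), and the right-hand side is ill-defined or trivial otherwise.

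First, I would exploit $k$-periodicity. Since $k \in \mathbb{Z}$, both $\{y\}$ and $\{y/k\}$ are unchanged under $y \mapsto y + k$, so after a suitable integer shift we may assume $y \in [0, k)$. Writing $y = m + t$ with $m = \lfloor y \rfloor \in \{0, 1, \ldots, k-1\}$ and $t = \{y\} \in [0, 1)$, we have $\{y\} = t$ and $\{y/k\} = (m+t)/k$. Substituting into the target inequality and multiplying by $k$, the claim $(1-\{y\})\{y\} \leq k(1-\{y/k\})\{y/k\}$ becomes
\[
k(1 - t)\, t \;\leq\; (k - m - t)(m + t).
\]

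Expanding both sides and collecting powers of $t$, this rearranges to
\[
Q(t) \;:=\; (k - 1)\, t^2 \;-\; 2 m\, t \;+\; m(k - m) \;\geq\; 0.
\]
Since $k \geq 1$, the leading coefficient $k - 1$ is non-negative, so it suffices to check that the discriminant of $Q$ is non-positive. Direct computation yields
\[
\Delta \;=\; 4 m^2 - 4(k-1)\, m(k-m) \;=\; 4m\,\bigl[\, m - (k-1)(k-m)\,\bigr] \;=\; -4\, m\, k\, (k - m - 1),
\]
which is $\leq 0$ for every $m \in \{0, 1, \ldots, k-1\}$ because each of $m$, $k$, and $k - m - 1$ is non-negative. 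Hence $Q(t) \geq 0$ for all real $t$, which establishes the inequality.

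The main obstacle is identifying the right reformulation: it is not obvious \emph{a priori} that the inequality compactifies into a clean one-variable quadratic. Once the periodicity reduction and the integer-plus-fractional decomposition $y = m + t$ are in place, the algebra is forced and the factorization $\Delta = -4mk(k-m-1)$ makes the sign transparent; the boundary cases $m = 0$ and $m = k-1$ (where $\Delta = 0$ and $Q$ is a perfect square) serve as a sanity check.
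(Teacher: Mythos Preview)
Your proof is correct. Both arguments begin with the same $k$-periodicity reduction to $y \in [0,k)$ and the decomposition $y = m + t$, but diverge in how they handle the integer part $m$. The paper observes that the right-hand side, viewed as a function of $y$ on $[0,k]$, is a concave parabola symmetric about $k/2$, and uses this together with the period-$1$ structure of the left-hand side to argue that the worst interval is $[0,1]$; the inequality then reduces to the single comparison $y(1-y) \leq y(1-y/k)$ on $[0,1]$, which is immediate. You instead keep $m$ generic and check that the difference $Q(t) = (k-1)t^2 - 2mt + m(k-m)$ has non-positive discriminant $\Delta = -4mk(k-m-1)$ for every $m \in \{0,\ldots,k-1\}$. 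Your route is a bit more algebraic but is fully self-contained: it does not rely on the (somewhat implicit) monotonicity-and-symmetry argument that the paper uses to single out $m=0$ as the worst case. The paper's route is more geometric and yields a shorter endgame once the reduction is accepted.
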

\begin{proof}
It suffices to consider $y \in [0,k]$, as both $\{y\}(1-\{y\})$ and $\{y/k\}(1-\{y/k\})$ are periodic within this interval. The function $\{y/k\}(1-\{y/k\})$ is a quadratic which is monotonically increasing over $[0,k/2]$ and symmetric around $k/2$. As $(1-\{y\})\{y\}$ is periodic on intervals of length $1$ it suffices to show that $(1-\{y\})\{y\} \leq k\left(1-\left\{ \frac{y}{k}\right\} \right)\left\{ \frac{y}{k}\right\}$ on the interval $[0,1]$. At this point we can drop the fractional part, and simply need to compare two quadratics over $[0,1]$.
Equivalently we need to show that
$k(1-y/k)y/k \geq y(1-y)$ over $[0,1]$, which after simplifying both sides is equivalent to $y^2(1-1/k) \geq 0$ over this interval, which is true.
\end{proof}

Finally, we provide some basic optimization inequalities, which will allow us to prove our theorems.

\paragraph{Optimization Basics.} The first Lemma bounds the change in function value using smoothness, while the latter upper bounds the $\ell_2$ distance to optimality using the error in function value. We provide the proofs in Sections~\ref{sec:progress-1-step-pf} and~\ref{sec:pl-norm-bd-pf}.
\begin{lem}
\label{lem:progress-1-step}Let $f:\mathbb{R}^{n}\rightarrow\mathbb{R}$
be a $\beta$-smooth function. Then for any $\vdelta\in\mathbb{R}^{n}$,
\[
f\left(\x+\vdelta\right)\leq f\left(\x\right)+\left(1-\eta\right)\left\langle \nabla f\left(\x\right),\vdelta\right\rangle -\frac{\eta^{2}}{2\beta}\left\Vert \nabla f\left(\x\right)\right\Vert _{2}^{2}+\frac{\beta}{2}\left\Vert \frac{\eta}{\beta}\nabla f\left(\x\right)+\vdelta\right\Vert _{2}^{2}\,.
\]
\end{lem}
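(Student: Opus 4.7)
The plan is to observe that the right-hand side of the claimed inequality, when fully expanded, collapses to exactly the standard $\beta$-smoothness upper bound $f(\x) + \langle \nabla f(\x), \vdelta\rangle + \frac{\beta}{2}\|\vdelta\|_2^2$. Thus the lemma is just an algebraic reformulation of smoothness, designed to separate out the contributions of a ``virtual'' gradient step of size $\eta/\beta$ from an additive perturbation $\vdelta$. This decomposition is precisely what is needed to apply the bound when $\vdelta$ is the quantization error of $\qs_\grid$ applied to $\x - \frac{\eta}{\beta}\nabla f(\x)$, as in equation~(\ref{eq:progperstep}) earlier in the paper.

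First I would invoke $\beta$-smoothness in its standard form to obtain
\[
f(\x + \vdelta) \leq f(\x) + \langle \nabla f(\x), \vdelta\rangle + \frac{\beta}{2}\|\vdelta\|_2^2\,.
\]
Then I would simply expand the last term on the claimed right-hand side via the identity $\frac{\beta}{2}\|\va + \vb\|_2^2 = \frac{\beta}{2}\|\va\|_2^2 + \beta \langle \va, \vb\rangle + \frac{\beta}{2}\|\vb\|_2^2$ with $\va = \frac{\eta}{\beta}\nabla f(\x)$ and $\vb = \vdelta$. This produces three terms: $\frac{\eta^2}{2\beta}\|\nabla f(\x)\|_2^2$, $\eta \langle \nabla f(\x), \vdelta\rangle$, and $\frac{\beta}{2}\|\vdelta\|_2^2$.

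Combining these three terms with the $-\frac{\eta^2}{2\beta}\|\nabla f(\x)\|_2^2$ and $(1-\eta)\langle \nabla f(\x), \vdelta\rangle$ already present on the right-hand side, the quadratic-in-gradient contributions cancel exactly, and the two inner-product terms sum to $\langle \nabla f(\x), \vdelta\rangle$. What remains is $f(\x) + \langle \nabla f(\x), \vdelta\rangle + \frac{\beta}{2}\|\vdelta\|_2^2$, matching the smoothness upper bound derived in the first step. This closes the argument.

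There is no real obstacle here: the lemma is a pure algebraic identity plus one application of smoothness, and the only thing to be careful about is keeping signs and the factors of $\eta$ and $\beta$ straight when expanding the squared norm. The conceptual content (and the reason this reformulation is useful) lies in its downstream usage: setting $\vdelta = \qs_\grid(\x - \frac{\eta}{\beta}\nabla f(\x)) - \x$ isolates the $-\frac{\eta^2}{2\beta}\|\nabla f(\x)\|_2^2$ descent term against a pure quantization-error penalty $\frac{\beta}{2}\|\qs_\grid(\x - \frac{\eta}{\beta}\nabla f(\x)) - (\x - \frac{\eta}{\beta}\nabla f(\x))\|_2^2$, which is then controllable via Lemma~\ref{lem:quant-err-opt-rel-shift}.
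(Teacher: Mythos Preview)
Your proposal is correct and matches the paper's proof essentially step for step: both start from the $\beta$-smoothness bound $f(\x+\vdelta)\le f(\x)+\langle\nabla f(\x),\vdelta\rangle+\frac{\beta}{2}\|\vdelta\|_2^2$ and verify by expanding $\frac{\beta}{2}\|\frac{\eta}{\beta}\nabla f(\x)+\vdelta\|_2^2$ that the claimed right-hand side is an algebraic rewriting of this quantity. The only cosmetic difference is direction---the paper adds and subtracts terms to build up the square, whereas you expand the square and cancel---so there is nothing further to add.
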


\begin{lem}\label{lem:pl-norm-bd}
If $f:\mathbb{R}^{n}\rightarrow\mathbb{R}$ satisfies the $\alpha$-PL
condition, then for all $\x\in\mathbb{R}^{n}$,
\[
f\left(\x\right)-\fstar\geq\frac{\alpha}{2}\left\Vert \x-\xstar\right\Vert _{2}^{2}\,,
\]
where $\xstar\in\arg\min_{\x}f\left(\x\right)$.
\end{lem}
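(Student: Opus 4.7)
The plan is to derive the quadratic growth inequality from the $\alpha$-PL condition via a standard continuous-time gradient flow argument. I would consider the trajectory $\x(t)$ defined by $\dot{\x}(t) = -\nabla f(\x(t))$ with $\x(0) = \x$, show that it converges to some minimizer $\bar{\x}\in\arg\min f$, and upper bound the arc length $\int_0^\infty \|\dot{\x}(t)\|_2\,dt$ in terms of $\sqrt{f(\x)-\fstar}$. Taking $\xstar = \bar{\x}$ in the lemma, the triangle inequality then gives $\|\x - \xstar\|_2$ at most the arc length, which yields the claim. Existence and uniqueness of the flow follow from the fact that $\beta$-smoothness makes $\nabla f$ Lipschitz (this use of $\beta$-smoothness is only qualitative; no explicit $\beta$ appears in the final bound).

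The execution splits into two short steps. First, along the flow $\frac{d}{dt}(f(\x(t)) - \fstar) = -\|\nabla f(\x(t))\|_2^2$, and combining with the PL inequality $\|\nabla f(\x(t))\|_2^2 \geq 2\alpha(f(\x(t)) - \fstar)$ gives, by Grönwall,
\[
f(\x(t)) - \fstar \leq e^{-2\alpha t}\bigl(f(\x) - \fstar\bigr)\,.
\]
Second, I would differentiate the square root of the suboptimality:
\[
\frac{d}{dt}\sqrt{f(\x(t)) - \fstar} = -\frac{\|\nabla f(\x(t))\|_2^2}{2\sqrt{f(\x(t))-\fstar}} \leq -\sqrt{\alpha/2}\cdot\|\nabla f(\x(t))\|_2\,,
\]
where the inequality again uses PL to cancel one factor of $\|\nabla f(\x(t))\|_2$ against the denominator. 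Integrating from $0$ to $\infty$ and using that $f(\x(t))\to \fstar$ yields
\[
\int_0^\infty \|\nabla f(\x(t))\|_2\,dt \;\leq\; \sqrt{2(f(\x) - \fstar)/\alpha}\,.
\]

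Since $\|\dot{\x}(t)\|_2 = \|\nabla f(\x(t))\|_2$, the integral above is the arc length of the trajectory, so the trajectory is Cauchy and converges to a limit $\bar{\x}\in\mathbb{R}^n$. By continuity of $f$ and the exponential decay from the first step, $f(\bar{\x}) = \fstar$, so $\bar{\x}\in\arg\min f$ is a legitimate choice for $\xstar$. The triangle inequality then gives $\|\x - \xstar\|_2 \leq \int_0^\infty \|\dot{\x}(t)\|_2\,dt \leq \sqrt{2(f(\x)-\fstar)/\alpha}$, which is equivalent to the claim. The only delicate points are the well-definedness of $\bar{\x}$ as a limit in $\mathbb{R}^n$ and the verification that it minimizes $f$; both are immediate from the finite arc-length bound combined with continuity, so no serious obstacle remains.
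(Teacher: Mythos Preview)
Your proof is correct and follows essentially the same gradient-flow/arc-length argument as the paper: bound the length of a descent trajectory by $\sqrt{2(f(\x)-\fstar)/\alpha}$ via the derivative of $\sqrt{f-\fstar}$, then compare with $\|\x-\xstar\|_2$. The only cosmetic difference is that the paper runs the flow of $g=\sqrt{f-\fstar}$ rather than of $f$, but since $\nabla g\propto\nabla f$ the trajectories coincide up to reparametrization and the resulting bound is identical; your handling of the limit point via finite arc length is in fact slightly cleaner than the paper's ``reaches some $\x_T$'' phrasing.
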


We are now ready to prove the main theorems in this paper.

\subsection{SGD with weight quantization}\label{sec:SGD-weight-quant}

We first prove the stepping lemma for our quantized gradient method, in the case where full gradients are available. The steps are essentially the same we described in Section~\ref{sec:background}.

\begin{lem}
\label{lem:step-determ}Let $f\left(\x\right):\mathbb{R}^{n}\rightarrow\mathbb{R}$
be a $\beta$-smooth and $\alpha$-PL function. For each $r\in\left[-\gridstar/2,\gridstar/2\right]$,
let $\xstar_{r,\gridstar}$ be any minimizer of $f$ over $\gridstar\mathbb{Z}+r$.
Let $\grid=\frac{\gridstar}{\left\lceil 4\left(\beta/\alpha\right)^{2}\right\rceil }$.
Then letting $\xp=\qs_{\grid}\left(\x-\frac{1}{\beta}\nabla f\left(\x\right)\right)$,
one has that in expectation over the random bits used by the quantization
operator:
\[
\mathbb{E}f\left(\xp\right)-\mathbb{E}f\left(\xstar_{r,\gridstar}\right)\leq\left(1-\frac{\alpha}{2\beta}\right)\left(\mathbb{E}f\left(\x\right)-\mathbb{E}f\left(\xstar_{r,\gridstar}\right)\right)\,.
\]
\end{lem}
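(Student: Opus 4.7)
My plan is to execute the three-step template described in the Analysis Overview of Section~\ref{sec:analysis}. Set $\y = \x - \tfrac{1}{\beta}\nabla f(\x)$, so that $\xp = \qs_{\grid}(\y)$. First I would invoke Lemma~\ref{lem:progress-1-step} with $\eta=1$ and $\vdelta = \xp - \x$; the choice $\eta=1$ kills the cross term $(1-\eta)\langle \nabla f(\x), \vdelta\rangle$ exactly, producing the clean descent inequality
\[
f(\xp) \leq f(\x) - \frac{1}{2\beta}\|\nabla f(\x)\|_2^2 + \frac{\beta}{2}\|\qs_{\grid}(\y) - \y\|_2^2 .
\]
Taking expectation over the randomness of $\qs_{\grid}$ and applying Lemma~\ref{lem:quant-err-opt-rel-shift}, whose hypothesis $\gridstar/\grid \in \mathbb{Z}$ is guaranteed by the choice $\grid = \gridstar/\lceil 4(\beta/\alpha)^2\rceil$, upper bounds the quantization error by $\tfrac{\grid}{\gridstar}\,\mathbb{E}_r\|\xstar_{r,\gridstar} - \y\|_2^2$.

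Next I would convert this distance into a function-value suboptimality by combining the $\alpha$-PL condition (Lemma~\ref{lem:pl-norm-bd}) with the smoothness descent $f(\y) \leq f(\x) - \tfrac{1}{2\beta}\|\nabla f(\x)\|_2^2$, aiming for a bound of the form
\[
\mathbb{E}_r \|\xstar_{r,\gridstar} - \y\|_2^2 \leq \frac{2}{\alpha}\,\mathbb{E}_r\bigl(f(\y) - f(\xstar_{r,\gridstar})\bigr) .
\]
Substituting back and subtracting $\mathbb{E}_r f(\xstar_{r,\gridstar})$ from both sides produces a recursion of the shape
\[
\mathbb{E} f(\xp) - \mathbb{E}_r f(\xstar_{r,\gridstar}) \leq \bigl(f(\x) - \mathbb{E}_r f(\xstar_{r,\gridstar})\bigr)\Bigl(1 + \tfrac{\beta}{\alpha}\tfrac{\grid}{\gridstar}\Bigr) - \tfrac{1}{2\beta}\|\nabla f(\x)\|_2^2 .
\]
Finally I would apply the $\alpha$-PL inequality $\tfrac{1}{2}\|\nabla f(\x)\|_2^2 \geq \alpha(f(\x) - f^*) \geq \alpha(f(\x) - \mathbb{E}_r f(\xstar_{r,\gridstar}))$ (using $f^* \leq \mathbb{E}_r f(\xstar_{r,\gridstar})$), which contributes an extra $-\tfrac{\alpha}{\beta}(f(\x) - \mathbb{E}_r f(\xstar_{r,\gridstar}))$. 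Since $\grid/\gridstar \leq \tfrac{1}{4}(\alpha/\beta)^2$ forces $\tfrac{\beta}{\alpha}\tfrac{\grid}{\gridstar} \leq \tfrac{\alpha}{4\beta}$, the net contraction factor is at most $1 - \tfrac{\alpha}{\beta} + \tfrac{\alpha}{4\beta} \leq 1 - \tfrac{\alpha}{2\beta}$, matching the conclusion.

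The genuinely delicate step is the distance-to-suboptimality bound just above. Lemma~\ref{lem:pl-norm-bd} directly controls only $\|\y - \xstar\|_2^2$, where $\xstar$ is the \emph{global} minimizer, whereas here we need a bound involving the \emph{lattice-restricted} minimizer $\xstar_{r,\gridstar}$. Bridging this gap will likely require a triangle-inequality passage through $\xstar$, which introduces both a factor of $2$ and an extra term $\|\xstar - \xstar_{r,\gridstar}\|_2^2$ that has to be absorbed (e.g.\ by applying Lemma~\ref{lem:pl-norm-bd} at $\xstar_{r,\gridstar}$ itself, together with a smoothness bound on $f(\xstar_{r,\gridstar}) - f^*$), or alternatively a direct argument that uses the uniform shift $r \sim \unif{\intgs}$ to wash the slack out in expectation. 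Carefully aligning the resulting constants so that the ceiling in $\grid = \gridstar/\lceil 4(\beta/\alpha)^2\rceil$ lands cleanly on the stated $1 - \alpha/(2\beta)$ factor is the remaining bookkeeping, but the three conceptual pieces above should suffice.
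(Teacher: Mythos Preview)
Your plan follows the paper's proof almost exactly: Lemma~\ref{lem:progress-1-step} with $\eta=1$, Lemma~\ref{lem:quant-err-opt-rel-shift}, a triangle inequality through the global minimizer $\xstar$, and Lemma~\ref{lem:pl-norm-bd} applied at both $\y$ and $\xstar_{r,\gridstar}$. The one real gap is in how you dispose of the leftover term $\mathbb{E}_r[f(\xstar_{r,\gridstar})-f^*]$ that appears after the triangle-inequality split.

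Your two suggested mechanisms do not work. A ``smoothness bound on $f(\xstar_{r,\gridstar})-f^*$'' only gives $f(\xstar_{r,\gridstar})-f^*\le \tfrac{\beta}{2}\|\xstar_{r,\gridstar}-\xstar\|_2^2$, which combined with Lemma~\ref{lem:pl-norm-bd} is circular; and there is no averaging over $r$ that makes the term vanish, since $\mathbb{E}_r[f(\xstar_{r,\gridstar})-f^*]\ge 0$ with no a~priori upper bound. More subtly, your choice to relax the PL inequality to $\tfrac{1}{2}\|\nabla f(\x)\|_2^2 \ge \alpha\bigl(f(\x)-\mathbb{E}_r f(\xstar_{r,\gridstar})\bigr)$ throws away exactly the piece you need: carrying the triangle-inequality version through your recursion leaves a residual $+4\tfrac{\beta}{\alpha}\tfrac{\grid}{\gridstar}\,\mathbb{E}_r[f(\xstar_{r,\gridstar})-f^*]$ that is nonnegative and cannot be dropped.

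The paper's fix is to keep the \emph{unrelaxed} PL bound $\tfrac{1}{2}\|\nabla f(\x)\|_2^2 \ge \alpha(f(\x)-f^*)$, write everything relative to $f^*$, and only afterwards subtract $\mathbb{E}_r f(\xstar_{r,\gridstar})-f^*$ from both sides. The point is that the PL term then contributes an additional $-\tfrac{\alpha}{\beta}\bigl(\mathbb{E}_r f(\xstar_{r,\gridstar})-f^*\bigr)$, so the extra term ends up with coefficient $4\tfrac{\grid}{\gridstar}\tfrac{\beta}{\alpha}-\tfrac{\alpha}{\beta}$, which is $\le 0$ precisely because $\grid/\gridstar \le \tfrac{1}{4}(\alpha/\beta)^2$. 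That is where the $4$ in $\lceil 4(\beta/\alpha)^2\rceil$ comes from; it is not bookkeeping slack but the threshold that makes this coefficient nonpositive. Once this sign flips, the term is discarded and the remaining contraction factor is $1-\tfrac{\alpha}{\beta}+2\tfrac{\grid}{\gridstar}\tfrac{\beta}{\alpha}\le 1-\tfrac{\alpha}{2\beta}$, as claimed.
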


\begin{proof}
Letting $\vdelta=\xp-\x$, we write:
\begin{align*}
\left\Vert \frac{1}{\beta}\nabla f\left(\x\right)+\vdelta\right\Vert _{2}^{2} & =\left\Vert \left(\x+\vdelta\right)-\left(\x-\frac{1}{\beta}\nabla f\left(\x\right)\right)\right\Vert _{2}^{2}=\left\Vert \qs_{\grid}\left(\x-\frac{1}{\beta}\nabla f\left(\x\right)\right)-\left(\x-\frac{1}{\beta}\nabla f\left(\x\right)\right)\right\Vert _{2}^{2}\,.
\end{align*}
Also using Lemma \ref{lem:quant-err-opt-rel-shift}, we have that
for any $\xstar\in\arg\min_{\x}f\left(\x\right),$
\begin{align*}
\mathbb{E}\left\Vert \qs_{\grid}\left(\x-\frac{1}{\beta}\nabla f\left(\x\right)\right)-\left(\x-\frac{1}{\beta}\nabla f\left(\x\right)\right)\right\Vert _{2}^{2} & \leq\frac{\grid}{\gridstar}\mathbb{E}_{r}\left[\left\Vert \x-\frac{1}{\beta}\nabla f\left(\x\right)-\xstar_{r,\gridstar}\right\Vert _{2}^{2}\right]\\
 & \leq2\frac{\grid}{\gridstar}\left(\mathbb{E}_{r}\left[\left\Vert \xstar_{r,\gridstar}-\xstar\right\Vert _{2}^{2}\right]+\left\Vert \x-\frac{1}{\beta}\nabla f\left(\x\right)-\xstar\right\Vert _{2}^{2}\right)\,.
\end{align*}
Using the $\alpha$-PL condition we upper bound distance from $\xstar$
with function value i.e.
\begin{align*}
\left\Vert \xstar_{r,\gridstar}-\xstar\right\Vert _{2}^{2} & \leq\frac{2}{\alpha}\cdot f\left(\xstar_{r,\gridstar}\right)-f\left(\xstar\right)\,,\\
\left\Vert \x-\frac{1}{\beta}\nabla f\left(\x\right)-\xstar\right\Vert _{2}^{2} & \leq\frac{2}{\alpha}\left(f\left(\x-\frac{1}{\beta}\nabla f\left(\x\right)-\xstar\right)-f\left(\xstar\right)\right)\,.
\end{align*}
Combining these with Lemma \ref{lem:progress-1-step} for $\eta=1$
we conclude that 
\begin{align*}
\mathbb{E}f\left(\xp\right) & \leq f\left(\x\right)-\frac{1}{2\beta}\left\Vert \nabla f\left(\x\right)\right\Vert _{2}^{2}\\
 & +2\frac{\grid}{\gridstar}\cdot\frac{\beta}{\alpha}\left(f\left(\x-\frac{1}{\beta}\nabla f\left(\x\right)\right)-f\left(\xstar\right)\right)+2\frac{\grid}{\gridstar}\cdot\frac{\beta}{\alpha}\cdot\mathbb{E}_{r}\left[f\left(\xstar_{r,\gridstar}\right)-f\left(\xstar\right)\right]\,.
\end{align*}
Again, using the PL condition we lower bound $\frac{1}{2}\left\Vert \nabla f\left(\x\right)\right\Vert _{2}^{2}\geq\alpha\left(f\left(\x\right)-f\left(\xstar\right)\right)$,
which gives
\begin{align*}
 & \mathbb{E}\left[f\left(\xp\right)-f\left(\xstar\right)\right]\\
 & \leq\left(1-\frac{\alpha}{\beta}\right)\left(f\left(\x\right)-f\left(\xstar\right)\right)+2\frac{\grid}{\gridstar}\cdot\frac{\beta}{\alpha}\left(f\left(\x-\frac{1}{\beta}\nabla f\left(\x\right)\right)-f\left(\xstar\right)\right)+2\frac{\grid}{\gridstar}\cdot\frac{\beta}{\alpha}\cdot\mathbb{E}_{r}\left[f\left(\xstar_{r,\gridstar}\right)-f\left(\xstar\right)\right]\\
 & \leq\left(1-\frac{\alpha}{\beta}+2\frac{\grid}{\gridstar}\cdot\frac{\beta}{\alpha}\right)\left(f\left(\x\right)-f\left(\xstar\right)\right)+2\frac{\grid}{\gridstar}\cdot\frac{\beta}{\alpha}\cdot\mathbb{E}_{r}\left[f\left(\xstar_{r,\gridstar}\right)-f\left(\xstar\right)\right]\,.
\end{align*}
Equivalently we obtain
\begin{align*}
\mathbb{E}f\left(\xp\right) &-\mathbb{E}f\left(\xstar_{r,\gridstar}\right)  \leq\left(1-\frac{\alpha}{\beta}+2\frac{\grid}{\gridstar}\cdot\frac{\beta}{\alpha}\right)\left(f\left(\x\right)-f\left(\xstar_{r,\gridstar}\right)\right)+2\frac{\grid}{\gridstar}\cdot\frac{\beta}{\alpha}\cdot\mathbb{E}\left[f\left(\xstar_{r,\gridstar}\right)-f\left(\xstar\right)\right]\\
 & +f\left(\xstar\right)-\mathbb{E}f\left(\xstar_{r,\gridstar}\right)\\
 & =\left(1-\frac{\alpha}{\beta}+2\frac{\grid}{\gridstar}\cdot\frac{\beta}{\alpha}\right)\left(f\left(\x\right)-\mathbb{E}f\left(\xstar_{r,\gridstar}\right)\right)+\left(2\frac{\grid}{\gridstar}\cdot\frac{\beta}{\alpha}-\frac{\alpha}{\beta}+2\frac{\grid}{\gridstar}\cdot\frac{\beta}{\alpha}\right)\cdot\mathbb{E}\left[f\left(\xstar_{r,\gridstar}\right)-f\left(\xstar\right)\right]\\
 & =\left(1-\frac{\alpha}{\beta}+2\frac{\grid}{\gridstar}\cdot\frac{\beta}{\alpha}\right)\left(f\left(\x\right)-\mathbb{E}f\left(\xstar_{r,\gridstar}\right)\right)+\left(4\frac{\grid}{\gridstar}\cdot\frac{\beta}{\alpha}-\frac{\alpha}{\beta}\right)\cdot\mathbb{E}\left[f\left(\xstar_{r,\gridstar}\right)-f\left(\xstar\right)\right]\,.
\end{align*}
Since we set $\grid/\gridstar=\frac{1}{\left\lceil 4\left(\beta/\alpha\right)^{2}\right\rceil }$,
the second term is non-positive. Therefore in this case we have
\[
\mathbb{E}f\left(\xp\right)-\mathbb{E}f\left(\xstar_{r,\gridstar}\right)\leq\left(1-\frac{\alpha}{2\beta}\right)\left(f\left(\x\right)-\mathbb{E}f\left(\xstar_{r,\gridstar}\right)\right)\,,
\]
which concludes the proof.
\end{proof}

We now generalize the proof of Lemma~\ref{lem:step-determ} to the case where only stochastic gradients are available. The proof is essentially the same, the main difference being that we isolate terms involving the difference between the stochastic and the true gradient, which we bound separately using our variance bound.

\begin{lem}
\label{lem:step-random}Let $f:\mathbb{R}^{n}\rightarrow\mathbb{R}$
be a $\beta$-smooth and $\alpha$-PL function. For each $r\in\left[-\gridstar/2,\gridstar/2\right]$,
let $\xstar_{r,\gridstar}$ be any minimizer of $f$ over $\gridstar\mathbb{Z}+r$.
Let $\grid=\frac{\eta}{\left\lceil 16\left(\beta/\alpha\right)^{2}\right\rceil }\cdot\gridstar$.
Let $\xp=\qs_{\grid}\left(\x-\frac{\eta}{\beta}\grad{\x}\right)$,
where $\grad{\x}$ is an unbiased estimator for $\nabla f\left(\x\right)$
i.e. $\mathbb{E}\left[\grad{\x}\vert\x\right]=\nabla f\left(\x\right)$,
and $0<\eta\leq1$ is a step size parameter. Furthermore assume that
the variance of $\grad{\x}$ is bounded $\mathbb{E}\left\Vert \grad{\x}-\nabla f\left(\x\right)\right\Vert _{2}^{2}\leq\sigma^{2}$,
for a real parameter $\sigma>0$. Then, for $r\sim\unif{\intgs}$,
in expectation over the gradient stochasticity:
\[
\mathbb{E}\left[f\left(\xp\right)\vert\x\right]-\mathbb{E}f\left(\xstar_{r,\gridstar}\right)\leq\left(1-\frac{3}{4}\eta\frac{\alpha}{\beta}\right)\left(f\left(\x\right)-\mathbb{E}f\left(\xstar_{r,\gridstar}\right)\right)+\frac{5}{4}\frac{\eta^{2}}{\beta}\sigma^{2}\,.
\]
\end{lem}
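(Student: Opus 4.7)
The plan is to mirror the proof of Lemma~\ref{lem:step-determ}, but carefully isolate the stochastic part of $\grad{\x}$ so that its variance contributes only an additive $O(\eta^{2}\sigma^{2}/\beta)$ term. Setting $\vdelta=\xp-\x$ and applying Lemma~\ref{lem:progress-1-step} with the free parameter of that lemma equal to our step size, I first take the conditional expectation given $\x$ over both the shift $r$ inside $\qs_{\grid}$ and the gradient noise. Unbiasedness of $\qs_{\grid}$ (Lemma~\ref{lem:mean-var-T}) together with $\mathbb{E}[\grad{\x}\vert\x]=\nabla f(\x)$ gives $\mathbb{E}[\vdelta\vert\x]=-\tfrac{\eta}{\beta}\nabla f(\x)$, so the $(1-\eta)\langle\nabla f(\x),\vdelta\rangle$ and $-\tfrac{\eta^{2}}{2\beta}\|\nabla f(\x)\|_{2}^{2}$ contributions combine into $-\tfrac{\eta}{\beta}(1-\eta/2)\|\nabla f(\x)\|_{2}^{2}\leq -\tfrac{\eta}{2\beta}\|\nabla f(\x)\|_{2}^{2}$ for $\eta\leq 1$, which the $\alpha$-PL condition converts into the linear-contraction seed $-\tfrac{\eta\alpha}{\beta}(f(\x)-f(\xstar))$.

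For the residual quadratic $\tfrac{\beta}{2}\|\tfrac{\eta}{\beta}\nabla f(\x)+\vdelta\|_{2}^{2}=\tfrac{\beta}{2}\|\qs_{\grid}(\y)-(\x-\tfrac{\eta}{\beta}\nabla f(\x))\|_{2}^{2}$, where $\y=\x-\tfrac{\eta}{\beta}\grad{\x}$, I insert $\y$ via $\|a-b\|_{2}^{2}\leq 2\|a-\y\|_{2}^{2}+2\|\y-b\|_{2}^{2}$ to split it into a pure quantization error $\|\qs_{\grid}(\y)-\y\|_{2}^{2}$ and a pure noise piece $\tfrac{\eta^{2}}{\beta^{2}}\|\grad{\x}-\nabla f(\x)\|_{2}^{2}$ whose expectation is at most $\tfrac{\eta^{2}\sigma^{2}}{\beta^{2}}$. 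Lemma~\ref{lem:quant-err-opt-rel-shift} bounds the former by $\tfrac{\grid}{\gridstar}\mathbb{E}_{r}\|\xstar_{r,\gridstar}-\y\|_{2}^{2}$; splitting once more through a global minimizer $\xstar$ and invoking Lemma~\ref{lem:pl-norm-bd} converts the two resulting distances into function-value gaps $f(\xstar_{r,\gridstar})-f(\xstar)$ and $f(\y)-f(\xstar)$. The only new ingredient relative to the deterministic case is controlling $\mathbb{E}[f(\y)\vert\x]$, which follows from a single application of $\beta$-smoothness to a vanilla SGD step: $\mathbb{E}[f(\y)\vert\x]\leq f(\x)+\tfrac{\eta^{2}}{2\beta}\sigma^{2}$ for $\eta\leq 1$, since the $\|\nabla f(\x)\|_{2}^{2}$ coefficient $\tfrac{\eta}{\beta}(\eta/2-1)$ is non-positive.

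Collecting everything, the choice $\grid/\gridstar=\eta/\lceil 16(\beta/\alpha)^{2}\rceil$ forces the $\tfrac{\beta}{\alpha}\cdot\tfrac{\grid}{\gridstar}$ prefactor down to at most $\tfrac{\eta\alpha}{4\beta}$, so the quantization channel absorbs at most a quarter of the PL contraction, leaving the net rate $-\tfrac{3\eta\alpha}{4\beta}(f(\x)-f(\xstar))$. Exactly as in Lemma~\ref{lem:step-determ}, the coefficient on $\mathbb{E}_{r}[f(\xstar_{r,\gridstar})-f(\xstar)]$ becomes non-positive after rearranging (using $\tfrac{\eta\alpha}{4\beta}\leq\tfrac{3\eta\alpha}{4\beta}$ and $f(\xstar_{r,\gridstar})\geq f(\xstar)$), which lets me replace the auxiliary $f(\xstar)$ by $\mathbb{E}f(\xstar_{r,\gridstar})$ in the target bound. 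The two independent $\sigma^{2}$ contributions---the direct noise split and the $\tfrac{\grid}{\gridstar}\cdot\tfrac{\eta^{2}\sigma^{2}}{2\beta}$ feedback through the quantization bound on $\mathbb{E}[f(\y)\vert\x]$---add up to at most $\tfrac{5}{4}\tfrac{\eta^{2}\sigma^{2}}{\beta}$ once $\alpha\leq\beta$ is used to simplify the feedback term.

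The main obstacle I anticipate is the bookkeeping over three simultaneous sources of randomness---the shift $r$, the stochastic gradient at $\x$, and the auxiliary unconstrained minimizer $\xstar$ introduced only as an algebraic anchor---and arranging things so the $\xstar$-dependent residuals cancel with the correct sign. No new analytical tool is needed beyond Lemmas~\ref{lem:progress-1-step},~\ref{lem:quant-err-opt-rel-shift} and~\ref{lem:pl-norm-bd}; the delicate part is tracking the $\eta$ powers carefully enough that the contraction stays linear in $\eta$ while the noise stays quadratic, so that the advertised $1-\tfrac{3}{4}\eta\alpha/\beta$ rate and $\tfrac{5}{4}\eta^{2}\sigma^{2}/\beta$ additive error come out with the right constants.
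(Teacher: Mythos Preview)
Your proposal is correct and follows essentially the same route as the paper: apply Lemma~\ref{lem:progress-1-step}, split the residual quadratic into a quantization piece (handled via Lemma~\ref{lem:quant-err-opt-rel-shift} and Lemma~\ref{lem:pl-norm-bd} through an auxiliary unconstrained minimizer $\xstar$) and a pure-noise piece, use unbiasedness of $\qs_{\grid}$ to evaluate $\mathbb{E}[\vdelta\mid\x]$, and then rearrange so the $(\mathbb{E}_{r}f(\xstar_{r,\gridstar})-f(\xstar))$ coefficient becomes non-positive under the stated choice of $\grid/\gridstar$. The one cosmetic difference is that the paper, after reaching $\|\y-\xstar\|_{2}^{2}$ with $\y=\x-\tfrac{\eta}{\beta}\grad{\x}$, inserts the \emph{deterministic} point $\x-\tfrac{\eta}{\beta}\nabla f(\x)$ via another $\|a+b\|^{2}\leq 2\|a\|^{2}+2\|b\|^{2}$ split before invoking Lemma~\ref{lem:pl-norm-bd}, whereas you apply Lemma~\ref{lem:pl-norm-bd} directly to the stochastic $\y$ and then control $\mathbb{E}[f(\y)\mid\x]\leq f(\x)+\tfrac{\eta^{2}}{2\beta}\sigma^{2}$ by a one-step SGD smoothness bound; your variant saves one triangle-inequality factor of $2$ and yields the same constants.
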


\begin{proof}
We follow the analysis from Lemma \ref{lem:step-determ}, while moving
the stochastic gradients into expressions that involve the stochastic
variance. Letting $\delta=\xp-\x$, we write:
\begin{align*}
\left\Vert \frac{\eta}{\beta}\nabla f\left(\x\right)+\vdelta\right\Vert _{2}^{2} & =\left\Vert \left(\x+\vdelta\right)-\left(\x-\frac{\eta}{\beta}\nabla f\left(\x\right)\right)\right\Vert _{2}^{2}=\left\Vert \qs_{r,\grid}\left(\x-\frac{\eta}{\beta}\grad{\x}\right)-\left(\x-\frac{\eta}{\beta}\nabla f\left(\x\right)\right)\right\Vert _{2}^{2}\\
 & \leq2\left\Vert \qs_{r,\grid}\left(\x-\frac{\eta}{\beta}\grad{\x}\right)-\left(\x-\frac{\eta}{\beta}\grad{\x}\right)\right\Vert _{2}^{2}+2\left\Vert \frac{\eta}{\beta}\left(\grad{\x}-\nabla f\left(\x\right)\right)\right\Vert _{2}^{2}\,,
\end{align*}
where we used the inequality $\left\Vert \va+\vb\right\Vert _{2}^{2}\leq2\left\Vert \va\right\Vert _{2}^{2}+2\left\Vert \vb\right\Vert _{2}^{2}$.
Also using Lemma \ref{lem:quant-err-opt-rel-shift}, we have that
for any $\xstar\in\arg\min_{\x}f\left(\x\right),$
\begin{align*}
 & \mathbb{E}\left\Vert \qs_{r,\grid}\left(\x-\frac{\eta}{\beta}\grad{\x}\right)-\left(\x-\frac{\eta}{\beta}\grad{\x}\right)\right\Vert _{2}^{2}\\
 & \leq\frac{\grid}{\gridstar}\mathbb{E}\left[\left\Vert \x-\frac{\eta}{\beta}\grad{\x}-\xstar_{r,\gridstar}\right\Vert _{2}^{2}\right]\\
 & \leq2\frac{\grid}{\gridstar}\left(\mathbb{E}\left[\left\Vert \xstar_{r,\gridstar}-\xstar\right\Vert _{2}^{2}\right]+\left\Vert \x-\frac{\eta}{\beta}\grad{\x}-\xstar\right\Vert _{2}^{2}\right)\\
 & \leq2\frac{\grid}{\gridstar}\left(\mathbb{E}\left[\left\Vert \xstar_{r,\gridstar}-\xstar\right\Vert _{2}^{2}\right]+2\left\Vert \x-\frac{\eta}{\beta}\nabla f\left(\x\right)-\xstar\right\Vert _{2}^{2}+2\left\Vert \frac{\eta}{\beta}\left(\grad{\x}-\nabla f\left(\x\right)\right)\right\Vert _{2}^{2}\right)\,.
\end{align*}
Using the $\alpha$-PL condition we upper bound distance from $\xstar$
with function value i.e.
\begin{align*}
\left\Vert \xstar_{r,\gridstar}-\xstar\right\Vert _{2}^{2} & \leq\frac{2}{\alpha}\cdot\left(f\left(\xstar_{r,\gridstar}\right)-f\left(\xstar\right)\right)\,,\\
\left\Vert \x-\frac{\eta}{\beta}\nabla f\left(\x\right)-\xstar\right\Vert _{2}^{2} & \leq\frac{2}{\alpha}\left(f\left(\x-\frac{\eta}{\beta}\nabla f\left(\x\right)\right)-f\left(\xstar\right)\right)\,.
\end{align*}
Combining these with Lemma \ref{lem:progress-1-step} we conclude
that in expectation over the random shift:
\begin{align*}
f\left(\xp\right) & \leq f\left(\x\right)+\left(1-\eta\right)\left\langle \nabla f\left(\x\right),\vdelta\right\rangle -\frac{\eta^{2}}{2\beta}\left\Vert \nabla f\left(\x\right)\right\Vert _{2}^{2}\\
 & +\frac{\beta}{2}\cdot\left(2\left\Vert \qs_{r,\grid}\left(\x-\frac{\eta}{\beta}\grad{\x}\right)-\left(\x-\frac{\eta}{\beta}\nabla f_{i}\left(\x\right)\right)\right\Vert _{2}^{2}+2\left\Vert \frac{\eta}{\beta}\left(\grad{\x}-\nabla f\left(\x\right)\right)\right\Vert _{2}^{2}\right)\\
 & \leq f\left(\x\right)+\left(1-\eta\right)\left\langle \nabla f\left(\x\right),\vdelta\right\rangle -\frac{\eta^{2}}{2\beta}\left\Vert \nabla f\left(\x\right)\right\Vert _{2}^{2}\\
 & +2\beta\frac{\grid}{\gridstar}\cdot\left(\left\Vert \xstar_{r,\gridstar}-\xstar\right\Vert _{2}^{2}+2\left\Vert x-\frac{\eta}{\beta}\nabla f\left(x\right)-\xstar\right\Vert _{2}^{2}+2\left\Vert \frac{\eta}{\beta}\left(\grad{\x}-\nabla f\left(\x\right)\right)\right\Vert _{2}^{2}\right)\\
 & +\frac{\eta^{2}}{\beta}\left\Vert \grad{\x}-\nabla f\left(\x\right)\right\Vert _{2}^{2}\\
 & \leq f\left(\x\right)+\left(1-\eta\right)\left\langle \nabla f\left(\x\right),\vdelta\right\rangle -\frac{\eta^{2}}{2\beta}\left\Vert \nabla f\left(\x\right)\right\Vert _{2}^{2}\\
 & +4\frac{\grid}{\gridstar}\frac{\beta}{\alpha}\left(f\left(\x-\frac{\eta}{\beta}\nabla f\left(\x\right)\right)-f\left(\xstar\right)\right)+4\frac{\grid}{\gridstar}\frac{\beta}{\alpha}\left(f\left(\xstar_{r,\gridstar}\right)-f\left(\xstar\right)\right)\\
 & +\frac{\eta^{2}}{\beta}\left(1+4\frac{\grid}{\gridstar}\right)\left\Vert \grad{\x}-\nabla f\left(\x\right)\right\Vert _{2}^{2}\,.
\end{align*}
Again, using the PL condition we lower bound $\frac{1}{2}\left\Vert \nabla f\left(\x\right)\right\Vert _{2}^{2}\geq\alpha\left(f\left(\x\right)-f\left(\xstar\right)\right)$,
which gives that in expectation over the random shift:
\begin{align*}
f\left(\xp\right)-f\left(\xstar\right) & \leq\left(1-\eta^{2}\frac{\alpha}{\beta}\right)\left(f\left(\x\right)-f\left(\xstar\right)\right)+\left(1-\eta\right)\left\langle \nabla f\left(\x\right),\vdelta\right\rangle \\
 & +4\frac{\grid}{\gridstar}\frac{\beta}{\alpha}\left(f\left(\x-\frac{\eta}{\beta}\nabla f\left(\x\right)\right)-f\left(\xstar\right)\right)+4\frac{\grid}{\gridstar}\frac{\beta}{\alpha}\left(f\left(\xstar_{r,\gridstar}\right)-f\left(\xstar\right)\right)\\
 & +\frac{\eta^{2}}{\beta}\left(1+4\frac{\grid}{\gridstar}\right)\left\Vert \grad{\x}-\nabla f\left(\x\right)\right\Vert _{2}^{2}\\
 & \leq\left(1-\eta^{2}\frac{\alpha}{\beta}+4\frac{\grid}{\gridstar}\frac{\beta}{\alpha}\right)\left(f\left(\x\right)-f\left(\xstar\right)\right)+4\frac{\grid}{\gridstar}\frac{\beta}{\alpha}\left(f\left(\xstar_{r,\gridstar}\right)-f\left(\xstar\right)\right)\\
 & +\left(1-\eta\right)\left\langle \nabla f\left(\x\right),\vdelta\right\rangle +\frac{\eta^{2}}{\beta}\left(1+4\frac{\grid}{\gridstar}\right)\left\Vert \grad{\x}-\nabla f\left(\x\right)\right\Vert _{2}^{2}\,.
\end{align*}
and equivalently, in expectation over the random shift:
\begin{align*}
\mathbb{E}\left[f\left(\xp\right)-f\left(\xstar_{r,\gridstar}\right)\right] & \leq\left(1-\eta^{2}\frac{\alpha}{\beta}+4\frac{\grid}{\gridstar}\frac{\beta}{\alpha}\right)\left(f\left(\x\right)-\mathbb{E}f\left(\xstar_{r,\gridstar}\right)\right)+\left(8\frac{\grid}{\gridstar}\frac{\beta}{\alpha}-\eta^{2}\frac{\alpha}{\beta}\right)\left(\mathbb{E}f\left(\xstar_{r,\gridstar}\right)-f\left(\xstar\right)\right)\\
 & +\left(1-\eta\right)\left\langle \nabla f\left(\x\right),\mathbb{E}\left[\vdelta\right]\right\rangle +\frac{\eta^{2}}{\beta}\left(1+4\frac{\grid}{\gridstar}\right)\left\Vert \grad{\x}-\nabla f\left(\x\right)\right\Vert _{2}^{2}\,.
\end{align*}
At this point we use Lemma \ref{lem:mean-var-T} to write 
\begin{align*}
\mathbb{E}\left[\vdelta\right] & =\mathbb{E}\left[\qs_{r,\grid}\left(\x-\frac{\eta}{\beta}\grad{\x}\right)-x\right]\\
 & =-\frac{\eta}{\beta}\grad{\x}\,,
\end{align*}
and thus 
\[
\left(1-\eta\right)\left\langle \nabla f\left(\x\right),\mathbb{E}\left[\vdelta\right]\right\rangle =-\frac{\eta}{\beta}\left(1-\eta\right)\left\langle \nabla f\left(\x\right),\grad{\x}\right\rangle \,.
\]
Therefore, after taking expectation over both the random shift and
gradient stochasticity we obtain: 
\begin{align*}
 & \mathbb{E}\left[f\left(\xp\right)-f\left(\xstar_{r,\gridstar}\right)\vert\x\right]\\
 & \leq\left(1-\eta^{2}\frac{\alpha}{\beta}+4\frac{\grid}{\gridstar}\frac{\beta}{\alpha}\right)\left(f\left(\x\right)-\mathbb{E}f\left(\xstar_{r,\gridstar}\right)\right)+\left(8\frac{\grid}{\gridstar}\frac{\beta}{\alpha}-\eta^{2}\frac{\alpha}{\beta}\right)\left(\mathbb{E}f\left(\xstar_{r,\gridstar}\right)-f\left(\xstar\right)\right)\\
 & -\frac{\eta}{\beta}\left(1-\eta\right)\left\Vert \nabla f\left(\x\right)\right\Vert _{2}^{2}+\frac{\eta^{2}}{\beta}\left(1+4\frac{\grid}{\gridstar}\right)\sigma^{2}\\
 & \leq\left(1-\eta^{2}\frac{\alpha}{\beta}+4\frac{\grid}{\gridstar}\frac{\beta}{\alpha}\right)\left(f\left(\x\right)-\mathbb{E}f\left(\xstar_{r,\gridstar}\right)\right)+\left(8\frac{\grid}{\gridstar}\frac{\beta}{\alpha}-\eta^{2}\frac{\alpha}{\beta}\right)\left(\mathbb{E}f\left(\xstar_{r,\gridstar}\right)-f\left(\xstar\right)\right)\\
 & -2\eta\left(1-\eta\right)\frac{\alpha}{\beta}\left(f\left(\x\right)-f\left(\xstar\right)\right)+\frac{\eta^{2}}{\beta}\left(1+4\frac{\grid}{\gridstar}\right)\sigma^{2}\\
 & =\left(1-\eta^{2}\frac{\alpha}{\beta}-2\eta\left(1-\eta\right)\frac{\alpha}{\beta}+4\frac{\grid}{\gridstar}\frac{\beta}{\alpha}\right)\left(f\left(\x\right)-\mathbb{E}f\left(\xstar_{r,\gridstar}\right)\right)+\left(8\frac{\grid}{\gridstar}\frac{\beta}{\alpha}-\eta^{2}\frac{\alpha}{\beta}-2\eta\left(1-\eta\right)\frac{\alpha}{\beta}\right)\left(\mathbb{E}f\left(\xstar_{r,\gridstar}\right)-f\left(\xstar\right)\right)\\
 & +\frac{\eta^{2}}{\beta}\left(1+4\frac{\grid}{\gridstar}\right)\sigma^{2}\,.
\end{align*}
Since we set $\grid/\gridstar=\frac{\eta}{\left\lceil 16\left(\beta/\alpha\right)^{2}\right\rceil }$,
the second term is non-positive. Therefore in this case we have
\begin{align*}
\mathbb{E}\left[f\left(\xp\right)-\mathbb{E}f\left(\xstar_{r,\gridstar}\right)\vert\x\right] & \leq\left(1-\eta^{2}\frac{\alpha}{\beta}-2\eta\left(1-\eta\right)\frac{\alpha}{\beta}+\frac{\eta}{4}\frac{\alpha}{\beta}\right)\left(f\left(\x\right)-\mathbb{E}f\left(\xstar_{r,\gridstar}\right)\right)+\frac{\eta^{2}}{\beta}\left(1+\frac{\eta}{4}\left(\frac{\alpha}{\beta}\right)^{2}\right)\sigma^{2}\\
 & \leq\left(1-\frac{7}{4}\eta\frac{\alpha}{\beta}+\eta^{2}\frac{\alpha}{\beta}\right)\left(f\left(\x\right)-\mathbb{E}f\left(\xstar_{r,\gridstar}\right)\right)+\frac{\eta^{2}}{\beta}\left(1+\frac{\eta}{4}\left(\frac{\alpha}{\beta}\right)^{2}\right)\sigma^{2}\\
 & \leq\left(1-\frac{3}{4}\eta\frac{\alpha}{\beta}\right)\left(f\left(\x\right)-\mathbb{E}f\left(\xstar_{r,\gridstar}\right)\right)+\frac{5}{4}\frac{\eta^{2}}{\beta}\sigma^{2}\,,
\end{align*}
as long as $\eta\leq1$. This concludes the proof.
\end{proof}

Using Lemma~\ref{lem:step-random} the proof of Theorem~\ref{thm:mainthm} follows very easily.

\mainthm*

\begin{proof}
Plugging in Lemma \ref{lem:step-random} and applying it for $T=\frac{10}{\eta}\frac{\beta}{\alpha}\ln\frac{f\left(\x_{0}\right)-\mathbb{E}f\left(\xstar_{r,\gridstar}\right)}{\err}$
we obtain:
\begin{align*}
\mathbb{E}f\left(\x_{T}\right)-\mathbb{E}f\left(\xstar_{r,\gridstar}\right) & \leq\frac{\err}{2}+\frac{5}{4}\frac{\eta^{2}}{\beta}\sigma^{2}\cdot\sum_{k=0}^{T-1}\left(1-\frac{3}{4}\eta\frac{\alpha}{\beta}\right)^{k}
  \leq\frac{\err}{2}+\frac{5}{4}\frac{\eta^{2}}{\beta}\sigma^{2}\cdot\frac{4}{3}\frac{1}{\eta}\frac{\beta}{\alpha}
  =\frac{\err}{2}+\frac{5}{3}\frac{\eta}{\alpha}\sigma^{2}\,.
\end{align*}
Since we set $\eta=\min\left\{ \frac{3}{10}\frac{\err\alpha}{\sigma^{2}},1\right\} $,
the entire quantity is at most $\err$, which concludes the proof.
\end{proof}

\subsection{Reducing Communication by Quantizing Gradients}\label{sec:quantgrad}

The approach described in Section \ref{sec:SGD-weight-quant} maintains
quantized weights, but communicating gradients may still be expensive.
In this section we show that any reasonable quantization method for
gradients can be used to reduce communication, while paying in exchange
an increased variance. This trade-off is inherent, as the
reduction in the number of bits requires injecting randomness, so
as the entropy of the output is not smaller than that of the original
message to be communicated.

To do so we use any gradient quantization method $\qf$, as long as it is an unbiased estimator for the input it takes, and has bounded variance. Our formal requirements  for $\qf$ are the following.

\begin{defn}\label{def:gradquantprop}
We say that a gradient quantization operator $\qf$ is a $(\sigmagrad, b)$-unbiased quantizer if it:
\begin{enumerate}
\item is an unbiased estimator: $\mathbb{E}\left[ \qf(\grad{\x}) \vert \grad{\x}\right] = \grad{\x}$,
\item has bounded variance on the stochastic gradients: $\mathbb{E}\left[ \left\Vert\qf(\grad{\x})  - \grad{\x} \right\Vert_2^2 | \grad{\x} \right] \leq \sigmagrad^2$, 
\item requires $b$ bits to communicate $\qf(\grad{\x})$.
\end{enumerate}
\end{defn}

By Lemma~\ref{lem:mean-var-T}, these requirements are automatically satisfied by our shift-and-round quantization operator $\qs$, and we can show that $\sigmagrad$ and $b$ are determined by the $\ell_1$ norm of $\grad{\x}$.

\paragraph{Standard Quantization Schemes and Their Communication Cost.} Another standard gradient quantization scheme can be obtained by independently rounding each coordinate to one of the neighboring points on the quantization grid, with an appropriate probability. An identical scheme has been previously used in other related works on gradient quantization~\cite{alistarh2017qsgd}.

\begin{defn}
[quantization by flipping a coin] \label{def:qflip}Let $\grid>0$
be a scalar defining the coarseness of the quantization grid. Let
the operator $\qg_{\grid}:\mathbb{R}\rightarrow\grid\mathbb{Z}$ defined
as
\[
\qg_{\grid}\left(x\right)=\begin{cases}
\grid\left\lfloor \frac{x}{\grid}\right\rfloor  & \text{with probability }1-\left(\frac{x}{\grid}-\left\lfloor \frac{x}{\grid}\right\rfloor \right)\\
\grid\left(\left\lfloor \frac{x}{\grid}\right\rfloor +1\right) & \text{with probability }\frac{x}{\grid}-\left\lfloor \frac{x}{\grid}\right\rfloor 
\end{cases}
\]
where $\grid>0$. We apply $\qg_{\grid}$ to vectors, with the meaning
that it is independently applied to each coordinate.
\end{defn}
It is fairly easy to prove that this satisfies very similar properties to those proved for $\qs$ in Lemma~\ref{lem:mean-var-T}, which we quickly prove in Section~\ref{sec:mean-var-Q-pf}.  We notice an important difference between these two quantization methods. While $\qg$ independently quantizes each coordinate, the quantization in $\qs$ is done dependently across coordinates, and the output is always a vector in $\grid \mathbb{Z}^n + r\one$, for a randomly sampled scalar $r$. Although morally they are quite similar (in fact, the shift after rounding could just as well be ignored, and still have an unbiased extimator), it is important if we want to relate the quality of the final solution to the best set of weights from a reasonably chosen grid. This difference is apparent when trying to provide bounds of the type of Lemma~\ref{lem:quant-err-opt-rel-shift}, bu this attempt falls through in the case of the $\qg$ operator.

As we can naively relate the  communication cost of a quantized gradient to its sparsity, it is important to discuss quantitative bounds. In both cases, the sparsity bound depends on the $\ell_{1}$ norm of the quantized
vector, and its easy to see that it is tight. By comparison, the bound
from~\cite{alistarh2017qsgd} is provided in terms of the $\ell_{2}$ norm of the vector, but
pays an additional $\sqrt{n}$ factor, which is suboptimal when the input is analytically sparse. For $\qs$ and $\qg$, we see that the variance introduced by quantizing
a generic vector $\v$ is bounded by $\grid\left\Vert \v\right\Vert _{1}$,
while its sparsity is $\left\Vert \v\right\Vert _{1}/\grid$. Hence a naive encoding of this quantized gradient requires at most $O\left(\frac{\left\Vert \v\right\Vert _{1}}{\grid}\left(\ln n+\ln{\left\Vert \v\right\Vert _{1}}\right)\right)$
bits of communication.

\paragraph{SGD with Weight and Gradient Quantization.} For gradient quantization operators that are unbiased estimators, we can use them as stochastic gradients inside the scheme we derived in Theorem
\ref{thm:mainthm}. To do so we crucially use the following identity involving conditional variance:
\begin{lem}
[Law of total variance]\label{lem:law-tot-var} Given random variables
$X$ and $Y$, one has that
\[
\text{Var}\left[Y\right]=\mathbb{E}\left[\text{Var}\left[Y\vert X\right]\right]+\text{Var}\left[\mathbb{E}\left[Y\vert X\right]\right]\,.
\]
\end{lem}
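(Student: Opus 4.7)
The plan is to prove the law of total variance by direct computation using the definition of variance together with the tower property of conditional expectation. This is a textbook identity, so there is no real obstacle; the main task is to organize the algebra cleanly.

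First I would expand the left-hand side via the standard identity $\text{Var}[Y] = \mathbb{E}[Y^2] - (\mathbb{E}[Y])^2$. Then I would rewrite each of the two terms on the right-hand side by conditioning on $X$. For the inner variance term, I would use $\text{Var}[Y\vert X] = \mathbb{E}[Y^2\vert X] - (\mathbb{E}[Y\vert X])^2$, and then apply the tower property $\mathbb{E}[\mathbb{E}[Y^2\vert X]] = \mathbb{E}[Y^2]$ to obtain
\[
\mathbb{E}[\text{Var}[Y\vert X]] = \mathbb{E}[Y^2] - \mathbb{E}[(\mathbb{E}[Y\vert X])^2]\,.
\]
For the outer variance term, I would apply the variance definition to the random variable $Z = \mathbb{E}[Y\vert X]$, giving
\[
\text{Var}[\mathbb{E}[Y\vert X]] = \mathbb{E}[(\mathbb{E}[Y\vert X])^2] - (\mathbb{E}[\mathbb{E}[Y\vert X]])^2 = \mathbb{E}[(\mathbb{E}[Y\vert X])^2] - (\mathbb{E}[Y])^2\,,
\]
where the last step again uses the tower property.

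Finally, I would add these two expressions. The $\mathbb{E}[(\mathbb{E}[Y\vert X])^2]$ terms cancel, leaving exactly $\mathbb{E}[Y^2] - (\mathbb{E}[Y])^2 = \text{Var}[Y]$, which completes the proof. The argument is essentially two lines once the tower property is invoked, so no step should present any real difficulty; the only care needed is to keep track of which expectations are conditional versus unconditional.
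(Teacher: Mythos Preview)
Your proposal is correct and is the standard textbook derivation of the law of total variance. The paper itself does not prove this lemma at all; it simply states it as a known identity and immediately uses it in the proof of Corollary~\ref{cor:quant-var}, so there is no paper proof to compare against.
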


\begin{cor}
\label{cor:quant-var}Consider a stochastic gradient estimator $\grad{\x}$
such that $\mathbb{E}\left[\grad{\x}\vert\x\right]=\nabla f\left(\x\right)$
and $\mathbb{E}\left[\left\Vert \grad{\x}-\nabla f\left(\x\right)\right\Vert _{2}^{2} \vert \x \right]\leq\sigma^{2}$.
Consider   $(\sigmagrad,b)$-unbiased quantizer (Definition~\ref{def:gradquantprop}).
Then 
\[
\mathbb{E}\left[\qg_{\grid}\left(\grad{\x}\right)\vert\x\right]=\nabla f\left(\x\right)\,,
\]
i.e. it is an unbiased estimator for the gradient, and
\[
\mathbb{E}\left[\left\Vert \qg_{\grid}\left(\grad{\x}\right)-\nabla f\left(\x\right)\right\Vert _{2}^{2}\right]
\leq
\sigmagrad^2+\sigma^{2}\,.
\]
\end{cor}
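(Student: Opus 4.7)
The plan is to handle the two claims separately: the unbiasedness is an immediate tower-property computation, and the variance bound follows by a cross-term decomposition that is morally the vector form of the law of total variance (Lemma~\ref{lem:law-tot-var}). Throughout, I would use the fact that the randomness of $\qf$ (equivalently $\qg_\grid$) is independent of $\x$ once $\grad{\x}$ is given, so conditioning on $\grad{\x}$ is enough to invoke the properties from Definition~\ref{def:gradquantprop}.

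For the unbiasedness, I would write
\[
\mathbb{E}\!\left[\qf(\grad{\x})\,\big|\,\x\right]
=\mathbb{E}\!\left[\,\mathbb{E}[\qf(\grad{\x})\mid \grad{\x}]\,\big|\,\x\right]
=\mathbb{E}\!\left[\grad{\x}\,\big|\,\x\right]
=\nabla f(\x)\,,
\]
where the inner equality uses property (1) of Definition~\ref{def:gradquantprop}, and the outer equality uses the assumed unbiasedness of $\grad{\x}$. This step is entirely routine.

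For the variance, I would split the error into a quantization part and a stochastic-gradient part. Writing $\qf(\grad{\x})-\nabla f(\x) = \bigl(\qf(\grad{\x})-\grad{\x}\bigr)+\bigl(\grad{\x}-\nabla f(\x)\bigr)$ and expanding the squared $\ell_2$ norm yields three terms; I claim the cross term vanishes in expectation. Indeed, conditioning on $\grad{\x}$ and $\x$, the factor $\grad{\x}-\nabla f(\x)$ is fixed, and by property (1) of Definition~\ref{def:gradquantprop}, $\mathbb{E}[\qf(\grad{\x})-\grad{\x}\mid \grad{\x}]=0$. Therefore
\[
\mathbb{E}\!\left[\left\Vert\qf(\grad{\x})-\nabla f(\x)\right\Vert_2^2\,\big|\,\x\right]
=\mathbb{E}\!\left[\left\Vert\qf(\grad{\x})-\grad{\x}\right\Vert_2^2\,\big|\,\x\right]+\mathbb{E}\!\left[\left\Vert\grad{\x}-\nabla f(\x)\right\Vert_2^2\,\big|\,\x\right].
\]
Applying property (2) of Definition~\ref{def:gradquantprop} to the first term after conditioning on $\grad{\x}$ and the assumed gradient variance bound to the second term yields the desired bound $\sigmagrad^2+\sigma^2$.

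There is no real obstacle here: the only subtlety is being careful that the quantizer's stated variance bound is conditional on $\grad{\x}$, so the outer expectation in $\x$ does not inflate it. Apart from this bookkeeping, both parts of the corollary are standard consequences of iterated conditioning, and the structure is exactly what Lemma~\ref{lem:law-tot-var} formalizes applied coordinate-wise.
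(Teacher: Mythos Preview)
Your proof is correct and essentially the same as the paper's: the unbiasedness is handled identically via the tower property, and for the variance the paper invokes the law of total variance (Lemma~\ref{lem:law-tot-var}) as a black box, whereas you carry out the underlying cross-term decomposition explicitly. Your version even yields the slightly stronger conditional-on-$\x$ bound, from which the paper's unconditional statement follows by averaging.
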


\begin{proof}
The fact that the quantized gradient is an unbiased estimator for
$\nabla f\left(\x\right)$ follows from the law of total expectation,
as 
\begin{align*}
\mathbb{E}\left[\qg_{\grid}\left(\grad{\x}\right)\vert\x\right] & =\mathbb{E}\left[\mathbb{E}\left[\qg_{\grid}\left(\grad{\x}\right)\vert\x,\grad{\x}\right]\right]=\mathbb{E}\left[\mathbb{E}\left[\grad{\x}\vert\x\right]\right]=\nabla f\left(\x\right)\,.
\end{align*}
For the variance, we use Lemma \ref{lem:law-tot-var} to write: 
\begin{align*}
\mathbb{E}\left[\left\Vert \qg_{\grid}\left(\grad{\x}\right)-\nabla f\left(\x\right)\right\Vert _{2}^{2}\right] & =\text{Var}\left[\qg_{\grid}\left(\grad{\x}\right)\right]\\
 & =\mathbb{E}\left[\text{Var}\left[\qg_{\grid}\left(\grad{\x}\right)\vert\grad{\x}\right]\right]+\text{Var}\left[\mathbb{E}\left[\qg_{\grid}\left(\grad{\x}\right)\vert\grad{\x}\right]\right]\\
 & \leq
 \sigmagrad^2
 +\text{Var}\left[\grad{\x}\right]\\
 & =\sigmagrad^2+\sigma^{2}\,.
\end{align*}
\end{proof}
Finally, combining Theorem \ref{thm:mainthm} with Corollary \ref{cor:quant-var}, we obtain
the final result from Corollary~\ref{cor:qwqg}.

\subsection{Deferred Proofs}\label{sec:deferredproofs}

\subsubsection{Proof of Lemma~\ref{lem:mean-var-T}}\label{sec:mean-var-T-pf}
\begin{proof}
For both the mean and variance computation, it suffices to prove these bounds for the scalar operator.

We note that by definition $\qs_{r,\grid}\left(x\right)-r=\qs_{0,\grid}\left(x-r\right)=\grid\cdot\qs_{0,1}\left(\frac{x-r}{\grid}\right):=\grid\cdot\left\lfloor \frac{x-r}{\grid}\right\rceil $.
Also let $\left\{ x\right\} =x-\lfloor x\rfloor$ denote the fractional
part of $x$. We can easily verify that for any scalar $0\leq z<1$,
we have 
\begin{equation}
\mathbb{E}_{u\sim\unif{\interval}}\left[\left\lfloor z+u\right\rceil \right]=z\,.\label{eq:shift-and-round}
\end{equation}
This is because $\left\lfloor z+u\right\rceil =1$ if and only if
$z+u\geq1/2$ i.e. $u\geq\frac{1}{2}-z$, which happens with probability
$z$. Now we can express the expectation of $\qs_{r,\epsilon}\left(x\right)$
as follows:

\begin{align*}
\mathbb{E}\left[\qs_{\grid}\left(x\right)\right] & =\mathbb{E}_{r}\left[\qs_{0,\grid}\left(x-r\right)+r\right]\\
 & =\mathbb{E}_{r}\left[\qs_{0,\grid}\left(x-r\right)\right]+\mathbb{E}_{r}\left[r\right]\\
 & =\mathbb{E}_{r}\left[\qs_{0,\grid}\left(\grid\left\lfloor \frac{x}{\grid}\right\rfloor +\grid\left\{ \frac{x}{\grid}\right\} -r\right)\right]+\mathbb{E}_{r}\left[r\right]\\
 & =\grid\left\lfloor \frac{x}{\grid}\right\rfloor +\mathbb{E}_{r}\left[\qs_{0,\epsilon}\left(\grid\left\{ \frac{x}{\grid}\right\} -r\right)\right]+\mathbb{E}_{r}\left[r\right]\\
 & =\grid\left\lfloor \frac{x}{\grid}\right\rfloor +\mathbb{E}_{r}\left[\grid\cdot\qs_{0,1}\left(\left\{ \frac{x}{\grid}\right\} -\frac{r}{\grid}\right)\right]+\mathbb{E}_{r}\left[r\right]\,.
\end{align*}
In the last line we used the fact that $\qs_{0,\grid}\left(y\right)=\grid\cdot\qs_{0,1}\left(\frac{y}{\grid}\right)$.
Now we reparameterize by using $u:=r/\grid$, where $r\sim\unif{\interval}$.
This allows to write the term in the middle as:
\[
\mathbb{E}_{r}\left[\grid\cdot\qs_{0,1}\left(\left\{ \frac{x}{\grid}\right\} -\frac{r}{\grid}\right)\right]=\grid\cdot\mathbb{E}_{u\sim\unif{\interval}}\left[\qs_{0,1}\left(\left\{ \frac{x}{\grid}\right\} -u\right)\right]=\grid\cdot\left\{ \frac{x}{\grid}\right\} \,,
\]
were we used (\ref{eq:shift-and-round}). Plugging back in we obtain
that
\[
\mathbb{E}\left[\qs_{r,\grid}\left(x\right)\right]=\grid\left\lfloor \frac{x}{\grid}\right\rfloor +\grid\cdot\left\{ \frac{x}{\grid}\right\} +0=x\,.
\]
Next we compute the scalar variance:
\begin{align*}
\mathbb{E}\left[\left(\qs_{\grid}\left(x\right)-x\right)^{2}\right] & =\mathbb{E}_{r}\left[\left(\qs_{0,\grid}\left(x-r\right)-x\right)^{2}\right]\\
 & =\mathbb{E}_{r}\left[\left(\grid\left\lfloor \frac{x}{\grid}\right\rfloor +\grid\cdot\qs_{0,1}\left(\left\{ \frac{x}{\grid}\right\} -\frac{r}{\grid}\right)-x\right)^{2}\right]\\
 & =\mathbb{E}_{r}\left[\grid^{2}\cdot\left(\left\lfloor \frac{x}{\grid}\right\rfloor +\qs_{0,1}\left(\left\{ \frac{x}{\grid}\right\} -\frac{r}{\grid}\right)-\frac{x}{\grid}\right)^{2}\right]\\
 & =\mathbb{E}_{u\sim\unif{\interval}}\left[\grid^{2}\cdot\left(\left\lfloor \frac{x}{\grid}\right\rfloor +\qs_{0,1}\left(\left\{ \frac{x}{\grid}\right\} -u\right)-\frac{x}{\grid}\right)^{2}\right]\\
 & =\grid^{2}\cdot\mathbb{E}_{u\sim\unif{\interval}}\left[\left(\qs_{0,1}\left(\left\{ \frac{x}{\grid}\right\} -u\right)-\left\{ \frac{x}{\grid}\right\} \right)^{2}\right]\,.
\end{align*}
Now we use the fact that for any scalar $0\leq z<1$ one has that
\[
\mathbb{E}_{u\sim\unif{\interval}}\left[\left(\left\lfloor z+u\right\rceil -z\right)^{2}\right]=z^{2}\,.
\]
This follows from the fact that $\left\lfloor z+u\right\rceil =1$
iff $u\geq1/2-z$, which happens with probability $z$, and makes
the expectation equal to
\[
\int_{-1/2}^{1/2-z}z^{2}du+\int_{1/2-z}^{1/2}\left(1-z\right)^{2}du=z\left(1-z\right)\,,
\]
which leads us to 
\[
\mathbb{E}\left[\left(\qs_{\grid}\left(x\right)-x\right)^{2}\right]=\grid^{2}\cdot\left\{ \frac{x}{\grid}\right\} \left(1-\left\{ \frac{x}{\grid}\right\} \right)\,,
\]
which gives us what we needed.

Finally, for the sparsity bound,  let us understand when a single scalar gets rounded to zero (before shifting back by $r$). We have that for $x\in\mathbb{R}$,
\begin{align*}
\mathbb{P}&\left[\qs_{r,\grid}\left(x\right)-r=0\right] 
 =\mathbb{P}\left[\qs_{r,1}\left(\frac{x}{\grid}\right)-r=0\right]
  =\begin{cases}
\int_{-1/2}^{1/2}\mathbf{1}_{\left\lfloor \frac{x}{\grid}-r\right\rceil =0}dr, & \left|x\right|<\grid,\\
0, & \grid\leq\left|x\right|,
\end{cases}\\
 & =\begin{cases}
\int_{-1/2}^{1/2}\mathbf{1}_{-\frac{1}{2}\leq\frac{x}{\grid}-r\leq\frac{1}{2}}dr, & \left|x\right|<\grid,\\
0, & \grid\leq\left|x\right|,
\end{cases}
  =\begin{cases}
\int_{-1/2}^{1/2}\mathbf{1}_{\frac{x}{\grid}-\frac{1}{2}\leq r\leq\frac{x}{\grid}+\frac{1}{2}}dr, & \left|\frac{x}{\grid}\right|<1,\\
0, & 1\leq\left|\frac{x}{\grid}\right|,
\end{cases}\\
 & =\min\left\{ \frac{x}{\grid}+\frac{1}{2},\frac{1}{2}\right\} -\max\left\{ \frac{x}{\grid}-\frac{1}{2},-\frac{1}{2}\right\} 
 =\frac{1}{2}+\min\left\{ \frac{x}{\grid},0\right\} -\left(-\frac{1}{2}+\max\left\{ \frac{x}{\grid},0\right\} \right)\\
 & =1+\min\left\{ \frac{x}{\grid},0\right\} -\max\left\{ \frac{x}{\grid},0\right\} 
  =1-\left|\frac{x}{\grid}\right|\,.
 \end{align*}
which shows that 
\begin{align*}
\mathbb{E}\left[\left\Vert \qf_{\grid}\left(\v\right)\right\Vert _{0}\right] & =\sum_{i=1}^{n}\left(1-\mathbb{P}\left[\qf_{\grid}\left(v_{i}\right)=0\right]\right)
  =\sum_{i=1}^{n}\begin{cases}
\left|\frac{v_{i}}{\grid}\right|, & \left|v_{i}\right|<\grid,\\
1, & \grid\leq\left|v_{i}\right|,
\end{cases}
  \leq\left\Vert \v\right\Vert _{1}/\grid\,.
\end{align*}

This concludes the proof.
\end{proof}

\subsubsection{Proof of Lemma~\ref{lem:quant-err-opt-rel-shift}}\label{sec:quant-err-opt-rel-shift-pf}
\begin{proof}
It suffices to prove this coordinate-wise. From Lemma \ref{lem:mean-var-T}
we have that for any $x\in\mathbb{R}$,
\[
\mathbb{E}\left[\left(\qs_{\grid}\left(x\right)-x\right)^{2}\right]=\grid^{2}\left(1-\left\{ \frac{x}{\grid}\right\} \right)\left\{ \frac{x}{\grid}\right\} 
\]
and similarly for $\gridstar$. Let $k=\gridstar/\grid$. Then 
\[
\mathbb{E}\left[\left(\qs_{\gridstar}\left(x\right)-x\right)^{2}\right]=k^{2}\grid^{2}\left(1-\left\{ \frac{x/\grid}{k}\right\} \right)\left\{ \frac{x/\grid}{k}\right\} 
\]
Applying the inequality from Lemma~\ref{lem:ineq}, 
we conclude that 
\[
\mathbb{E}\left[\left(\qs_{\grid}\left(x\right)-x\right)^{2}\right]=\grid^{2}\left(1-\left\{ \frac{x}{\grid}\right\} \right)\left\{ \frac{x}{\grid}\right\} \leq\grid^{2}\cdot k\left(1-\left\{ \frac{x}{k\grid}\right\} \right)\left\{ \frac{x}{k\grid}\right\} =\frac{1}{k}\mathbb{E}\left[\left(\qs_{\gridstar}\left(x\right)-x\right)^{2}\right]\,.
\]
Applying this bound to all coordinates we obtain
\[
\mathbb{E}\left[\left\Vert \qs_{\grid}\left(\x\right)-\x\right\Vert _{2}^{2}\right]\leq\frac{\grid}{\gridstar}\mathbb{E}\left[\left\Vert \qs_{r,\gridstar}\left(\x\right)-\x\right\Vert _{2}^{2}\right]\,.
\]
Also since $\qs_{r,\gridstar}$ rounds to the nearest point in $\gridstar\mathbb{Z}+r$,
clearly $\left\Vert \qs_{r,\gridstar}\left(\x\right)-\x\right\Vert _{2}^{2}\leq\left\Vert \xstar_{r,\gridstar}-\x\right\Vert _{2}^{2}$
for all $r$. Taking expectations on both sides and combining with the previous inequality concludes the proof.
\end{proof}

\subsubsection{Proof of Lemma~\ref{lem:progress-1-step}}\label{sec:progress-1-step-pf}
\begin{proof}
Using smoothness we have 
\begin{align*}
f\left(\x+\vdelta\right) & \leq f\left(\x\right)+\left\langle \nabla f\left(\x\right),\vdelta\right\rangle +\frac{\beta}{2}\left\Vert \vdelta\right\Vert _{2}^{2}\\
 & =f\left(\x\right)+\left(1-\eta\right)\left\langle \nabla f\left(\x\right),\vdelta\right\rangle -\frac{\eta^{2}}{2\beta}\left\Vert \nabla f\left(\x\right)\right\Vert _{2}^{2}+\left(\frac{\eta^{2}}{2\beta}\left\Vert \nabla f\left(\x\right)\right\Vert _{2}^{2}+\eta\left\langle \nabla f\left(\x\right),\vdelta\right\rangle +\frac{\beta}{2}\left\Vert \vdelta\right\Vert _{2}^{2}\right)\\
 & =f\left(\x\right)+\left(1-\eta\right)\left\langle \nabla f\left(\x\right),\vdelta\right\rangle -\frac{\eta^{2}}{2\beta}\left\Vert \nabla f\left(\x\right)\right\Vert _{2}^{2}+\frac{\beta}{2}\left\Vert \frac{\eta}{\beta}\nabla f\left(\x\right)+\vdelta\right\Vert _{2}^{2}\,.
\end{align*}
\end{proof}

\subsubsection{Proof of Lemma~\ref{lem:pl-norm-bd}}
\label{sec:pl-norm-bd-pf}
The proof is standard and can be found in literature, such as~\cite{karimi2016linear}. However, for completeness we reproduce it here.
\begin{proof}
 Let $g\left(\xx\right)=\sqrt{f\left(\xx\right)-f^{*}}$
for which we have
\[
\nabla g\left(\xx\right)=\frac{1}{2\sqrt{f\left(\xx\right)-f^{*}}}\nabla f\left(\xx\right)\ .
\]
Using the $\alpha$-PL condition we have
\begin{align*}
\left\Vert \nabla g\left(\xx\right)\right\Vert ^{2} & =\frac{1}{4\left(f\left(\xx\right)-f^{*}\right)}\cdot\left\Vert \nabla f\left(\xx\right)\right\Vert ^{2}
\geq\frac{1}{2\left(f\left(\xx\right)-f^{*}\right)}\cdot{\alpha}\cdot\left(f\left(\xx\right)-f^{*}\right)
=\frac{\alpha}{2}\ .
\end{align*}
Now starting at some $\xx_{0}$, we consider the dynamic $\dot{\xx}=-\nabla g\left(\xx\right)$.
We see that this always decreases function value until it reaches
some $\xx_{T}$ for which $\nabla g\left(\xx_{T}\right)=0$
and hence by the PL inequality, $\xx_{T}$ is a minimizer
i.e. $f\left(\xx_{T}\right)=f^{*}$. Now we can write 
\begin{align*}
g\left(\xx_{T}\right) & =g\left(\xx_{0}\right)+\int_{0}^{T}\left\langle \nabla g\left(\xx_{t}\right),\dot{\xx_{t}}\right\rangle dt=g\left(\xx_{0}\right)+\int_{0}^{T}\left\langle \nabla g\left(\xx_{t}\right),-\nabla g\left(\xx_{t}\right)\right\rangle dt\\
 & =g\left(\xx_{0}\right)-\int_{0}^{T}\left\Vert \nabla g\left(\xx_{t}\right)\right\Vert ^{2}dt\ .
\end{align*}
Thus 
\begin{align*}
g\left(\xx_{0}\right)-g\left(\xx_{T}\right) & =\int_{0}^{T}\left\Vert \nabla g\left(\xx_{t}\right)\right\Vert ^{2}dt\geq\sqrt{\frac{\alpha}{2}}\cdot\int_{0}^{T}\left\Vert \nabla g\left(\xx_{t}\right)\right\Vert dt=\sqrt{\frac{\alpha}{2}}\cdot\int_{0}^{T}\left\Vert \dot{\xx_{t}}\right\Vert dt\ ,
\end{align*}
where we used our lower bound on the norm of $\nabla g\left(\xx\right)$.
Finally, we use the fact that the last integral lower bounds the total
movement of $\xx$ as it moves from $\xx_{0}$ to $\xx_{T}$. Thus
\[
\int_{0}^{T}\left\Vert \dot{\xx_{t}}\right\Vert dt\geq\left\Vert \xx_{0}-\xx_{T}\right\Vert \ ,
\]
so 
\[
g\left(\xx_{0}\right)-g\left(\xx_{T}\right)\geq\sqrt{\frac{\alpha}{2}}\left\Vert \xx_{0}-\xx_{T}\right\Vert \ ,
\]
which enables us to conclude that
\[
f\left(\xx_{0}\right)-f^{*}\geq\frac{\alpha}{2}\left\Vert \xx_{0}-\xx_{T}\right\Vert ^{2}\ ,
\]
where $\xx_{T}$ is some global minimizer of $f$. This concludes
the proof.
\end{proof}

\subsubsection{Bound for Quantization by Coin Flip}
\label{sec:mean-var-Q-pf}
\begin{lem}
\label{lem:mean-var-Q}Let $\v\in\mathbb{R}^n$, and let $\grid>0$, and let $\qg_\grid$ be the quantization operator from Definition~\ref{def:qflip}.
Then,
\begin{align*}
\mathbb{E}\left[\qg_{\grid}\left(\v\right)\right] & =\v\,,\\
\mathbb{E}\left[\left\Vert \qg_{\grid}\left(\v\right)-\v\right\Vert_2^{2}\right] & =\grid^{2}\cdot \sum_{i=1}^n \left\{ \frac{v_i}{\grid}\right\} \left(1-\left\{ \frac{v_i}{\grid}\right\} \right)\,,\\
\mathbb{E}\left[\left\Vert \qg_{\grid}\left(\v\right)\right\Vert _{0}\right] & \leq\left\Vert \v\right\Vert _{1}/\grid\,.
\end{align*}
\end{lem}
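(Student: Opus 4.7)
The plan is to exploit the coordinate-wise independence built into Definition~\ref{def:qflip}: each coordinate is quantized independently, so all three claims reduce to scalar computations that then sum up across coordinates. Unlike the proof of Lemma~\ref{lem:mean-var-T}, we do not need to reparameterize by a random shift, so the calculations should be more elementary.

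First I would handle the scalar mean. Fix $x \in \mathbb{R}$ and set $p := \{x/\grid\} = x/\grid - \lfloor x/\grid \rfloor \in [0,1)$. By definition, $\qg_\grid(x) = \grid \lfloor x/\grid\rfloor$ with probability $1-p$ and $\grid(\lfloor x/\grid\rfloor+1)$ with probability $p$, so
\[
\mathbb{E}[\qg_\grid(x)] = \grid\lfloor x/\grid\rfloor + p\grid = \grid(\lfloor x/\grid\rfloor + \{x/\grid\}) = x\,.
\]
Linearity of expectation, applied coordinate-wise, then yields $\mathbb{E}[\qg_\grid(\v)] = \v$.

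For the variance, the two possible deviations are $\qg_\grid(x) - x = -\grid p$ (with probability $1-p$) and $\qg_\grid(x) - x = \grid(1-p)$ (with probability $p$). Thus
\[
\mathbb{E}[(\qg_\grid(x)-x)^2] = (1-p)\grid^2 p^2 + p\grid^2 (1-p)^2 = \grid^2 p(1-p)\,,
\]
which matches the formula when applied coordinate-wise. Because the coordinates are quantized independently, the variances add, giving $\mathbb{E}[\|\qg_\grid(\v)-\v\|_2^2] = \grid^2 \sum_i \{v_i/\grid\}(1-\{v_i/\grid\})$.

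The sparsity bound is the only part that requires minor casework, and this is the step I would expect to be the mildly subtle one. For a scalar $x$, I would compute $\mathbb{P}[\qg_\grid(x) = 0]$ separately depending on whether $x \in [0,\grid)$, $x \in (-\grid,0)$, or $|x| \ge \grid$. In the first case the probability of rounding down to $0$ is $1 - x/\grid = 1 - |x|/\grid$; in the second, $\lfloor x/\grid\rfloor = -1$ and the probability of rounding up to $0$ is $x/\grid + 1 = 1 - |x|/\grid$; in the third, $0$ is not in the support so the probability is $0$. Combining, $\mathbb{P}[\qg_\grid(x) \ne 0] = \min\{|x|/\grid,\, 1\} \le |x|/\grid$. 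Summing over coordinates using linearity of expectation,
\[
\mathbb{E}\bigl[\|\qg_\grid(\v)\|_0\bigr] = \sum_{i=1}^n \mathbb{P}[\qg_\grid(v_i) \ne 0] \le \sum_{i=1}^n |v_i|/\grid = \|\v\|_1/\grid\,,
\]
completing the proof.
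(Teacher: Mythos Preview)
Your proposal is correct and follows essentially the same approach as the paper: reduce to scalar computations coordinate-wise, compute the mean and variance of the two-point distribution directly, and bound $\mathbb{P}[\qg_\grid(x)\neq 0]$ by $|x|/\grid$ via the same case split on whether $|x|<\grid$. One minor remark: for the variance identity you do not actually need independence across coordinates, since $\|\qg_\grid(\v)-\v\|_2^2$ is already a sum of per-coordinate squared errors and linearity of expectation suffices.
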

\begin{proof}
For the expectation and variance, it suffices to prove that these bound holds coordinate-wise. Let $x\in\mathbb{R}$, and write $x=\grid\left(\left\lfloor \frac{x}{\grid}\right\rfloor +\left\{ \frac{x}{\grid}\right\} \right)$
so that 
\begin{align*}
\mathbb{E}\left[\qg_{\grid}\left(x\right)\right] & =\mathbb{E}\left[\qg_{\grid}\left(\grid\left(\left\lfloor \frac{x}{\grid}\right\rfloor +\left\{ \frac{x}{\grid}\right\} \right)\right)\right]\\
 & =\grid\left\lfloor \frac{x}{\grid}\right\rfloor +\mathbb{E}\left[\qg_{\grid}\left(\grid\left\{ \frac{x}{\grid}\right\} \right)\right]\\
 & =\grid\left\lfloor \frac{x}{\grid}\right\rfloor +\mathbb{E}\left[\qg_{\grid}\left(\grid\left\{ \frac{x}{\grid}\right\} \right)\right]\\
 & =\grid\left\lfloor \frac{x}{\grid}\right\rfloor +\grid\cdot\left\{ \frac{x}{\grid}\right\} \\
 & =x\,.
\end{align*}
Similarly we write the variance as:
\begin{align*}
\mathbb{E}\left[\left(\qg_{\grid}\left(x\right)-x\right)^{2}\right] & =\mathbb{E}\left[\left(\qg_{\grid}\left(\grid\left\{ \frac{x}{\grid}\right\} \right)-\grid\left\{ \frac{x}{\grid}\right\} \right)^{2}\right]\\
 & =\left(1-\left\{ \frac{x}{\grid}\right\} \right)\left(\grid\left\{ \frac{x}{\grid}\right\} \right)^{2}+\left\{ \frac{x}{\grid}\right\} \cdot\left(\grid-\grid\left\{ \frac{x}{\grid}\right\} \right)^{2}\\
 & =\grid^{2}\left(\left(1-\left\{ \frac{x}{\grid}\right\} \right)\left\{ \frac{x}{\grid}\right\} ^{2}+\left\{ \frac{x}{\grid}\right\} \cdot\left(1-\left\{ \frac{x}{\grid}\right\} \right)^{2}\right)\\
 & =\grid^{2}\left(1-\left\{ \frac{x}{\grid}\right\} \right)\left\{ \frac{x}{\grid}\right\} \,,
\end{align*}
For the sparsity bound, we need to understand when a single scalar gets rounded to zero. We have that for $x\in\mathbb{R}$,
\[
\mathbb{P}\left[\qg_{\grid}\left(x\right)=0\right]=\begin{cases}
1-\left|\frac{x}{\grid}\right|, & \left|x\right|<\grid,\\
0, & \grid\leq\left|x\right|,
\end{cases}
\]
which shows that 
\begin{align*}
\mathbb{E}\left[\left\Vert \qg_{\grid}\left(\v\right)\right\Vert _{0}\right] & =\sum_{i=1}^{n}\left(1-\mathbb{P}\left[\qg_{\grid}\left(v_{i}\right)=0\right]\right)\\
 & =\sum_{i=1}^{n}\begin{cases}
\left|\frac{v_{i}}{\grid}\right|, & \left|v_{i}\right|<\grid,\\
1, & \grid\leq\left|v_{i}\right|,
\end{cases}\\
 & \leq\left\Vert \v\right\Vert _{1}/\grid\,.
\end{align*}
\end{proof}

\end{document}